\documentclass[11pt]{article}
\usepackage[margin=0.85in]{geometry}
\usepackage{amsmath,amssymb,amsthm}
\usepackage{graphicx}
\usepackage{tikz}
\usepackage[most]{tcolorbox}
\usepackage{enumitem}
\usepackage{hyperref}
\usetikzlibrary{calc}

\newtheorem{theorem}{Theorem}[section]
\newtheorem{lemma}[theorem]{Lemma}

\theoremstyle{definition}

\theoremstyle{remark}

\newtcolorbox{modelbox}{
  colback=gray!7,
  colframe=gray!35,
  boxrule=0.6pt,
  arc=1mm,
  left=8pt,
  right=8pt,
  top=7pt,
  bottom=7pt
}

\newtcolorbox{algorithmbox}[1][]{
  colback=gray!4,
  colframe=gray!25,
  colbacktitle=gray!10,
  coltitle=black,
  boxrule=0.5pt,
  arc=1mm,
  left=7pt,
  right=7pt,
  top=6pt,
  bottom=6pt,
  before skip=8pt,
  after skip=8pt,
  title=#1,
  fonttitle=\bfseries
}
\tcolorboxenvironment{theorem}{
  colback=gray!5,
  colframe=gray!25,
  boxrule=0.5pt,
  arc=1mm,
  left=6pt,
  right=6pt,
  top=5pt,
  bottom=5pt,
  before skip=10pt,
  after skip=10pt
}

\usetikzlibrary{arrows.meta,calc}

\DeclareMathOperator{\vol}{vol}

\title{Sorting from Counterexamples}

\author{
Noga Alon\thanks{
Princeton University, Princeton, NJ 08544, USA, and
Tel Aviv University, Tel Aviv 69978, Israel.
Email: \texttt{nalon@math.princeton.edu}.
Research supported in part by NSF grant DMS-2553988.
}
\and
Shay Moran\thanks{
Technion -- Israel Institute of Technology and Google Research.
Email: \texttt{smoran@technion.ac.il}.
Research supported by PBC-VATAT, by the Technion Center for Machine Learning
and Intelligent Systems (MLIS), by ISF grant 2362/26, and by the European
Union (ERC, GENERALIZATION, 101039692).
Views and opinions expressed are however those of the authors only and do not
necessarily reflect those of the European Union or the European Research
Council Executive Agency. Neither the European Union nor the granting
authority can be held responsible for them.
}
\and
Shlomo Moran\thanks{
Technion -- Israel Institute of Technology.
Email: \texttt{moran@cs.technion.ac.il}.
}
}
\date{}

\begin{document}

\maketitle

\begin{abstract}
Consider the following problem of learning an unknown linear order on $n$
items. In each round, the learner guesses a complete ordering of the items and
receives either confirmation that the guess is correct or a counterexample:
a pair of items in the wrong order. The goal is to identify the unknown
order using as few queries as possible. We study this problem when up to $k$
of the returned counterexamples may be untruthful, where $k$ is not known in
advance. We determine the optimal query complexity up to
constant factors:
\[
    \Theta(n\log n + nk).
\]
Thus, while the noiseless complexity matches the classical complexity of
sorting, each untruthful counterexample incurs an additional cost of order
$n$. The upper bound is based on a geometric representation of permutations
and Gr\"unbaum's theorem, while the lower bound combines sorting arguments
with a Condorcet-type construction. We also study the case where the target ranking has a low-dimensional geometric representation: each item is represented by a point in
$\mathbb{R}^d$, and the ranking is obtained by projecting the points onto
an unknown direction. For these classes we give an upper bound of
$O(d^2\log n+dk)$ and a lower bound of $\Omega(d\log n+dk)$, leaving a
factor of $d$ gap in the noiseless term.
\end{abstract}
\section{Introduction}

Imagine asking a recommendation system where to go for dinner with your
partner. It returns a ranked list of nearby restaurants, placing a sushi
restaurant first and a falafel place near the bottom. What kind of feedback can such a system use to improve? Empirical studies have found pairwise preference feedback to be fast and consistent~\cite{CarteretteEtAl2008,ShahEtAl2014}. Such feedback can be much easier to provide than a full reranking and may require little active attention from the user: they can simply point out a pair that seems badly misordered: the falafel restaurant is
buried near the bottom even though they strongly prefer it, while the sushi
restaurant appears first despite being much less appealing. 
Such a pair is
a natural counterexample because its relative order is badly wrong in the
proposed ranking. 
This simple interaction motivates the problem we study:
learning an unknown ranking from counterexamples.
\medskip
\begin{modelbox}
\textbf{Ranking from counterexamples.}
There is an unknown target ranking $\pi^\star$ over $n$ items. On each round:

\begin{enumerate}[leftmargin=22pt,itemsep=2pt,topsep=4pt]
    \item The learner proposes a linear order $\pi$.
    \item The oracle either declares that $\pi$ is correct, in which case
    $\pi=\pi^\star$ and the interaction terminates, or returns an unordered
    pair $\{i,j\}$, indicating that the relative order of $i$
    and $j$ in~$\pi$ is wrong.
\end{enumerate}
A returned pair is \emph{truthful} if $\pi$ and $\pi^\star$ indeed disagree on the relative order of $i$ and $j$, and \emph{untruthful} otherwise. We allow at most $k$ untruthful counterexamples throughout the interaction, where $k$ is not known to the learner. Declarations that the proposed ranking is correct are always truthful.
\end{modelbox}
Thus, even if the learner proposes $\pi^\star$, the oracle may conceal this
fact by returning a counterexample, but such a counterexample is necessarily
untruthful and therefore consumes one of the at most $k$ allowed lies.
For a randomized learner, we measure the worst-case expected number of
queries, where the expectation is over the learner's internal randomness.
\medskip

This model can be viewed as a variation on the classical sorting problem. In
comparison-based sorting, the algorithm chooses a pair of items and learns
their relative order. Here, the learner instead proposes a complete ranking,
and the feedback reveals one pair on which this proposal is allegedly
incorrect. Our goal is to understand how many such rounds are needed to
identify the target ranking.

Allowing untruthful counterexamples is also natural in the motivating
example. A ranking may represent the typical preferences of a population of
similar users, while feedback from an individual user may occasionally
disagree with it. There are richer ways to model such variability, for
example through probabilistic preference models such as the
Bradley--Terry model \cite{BradleyTerry1952}. In this work, we focus on a
mathematically simpler formulation: the total number of untruthful
counterexamples is bounded by $k$, where $k$ is not known to the learner,
and we make no further assumptions on when these counterexamples occur or
how they are chosen. As we will see, even this basic setting already
presents nontrivial challenges. 
A useful feature of our algorithms is that they do not use $k$ as an input:
the same learner works for every finite lie budget. This parameter-free
viewpoint also makes the framework amenable to richer noise models, although
we leave a systematic study of such models for future work.


\medskip

At first sight, classical learning algorithms such as halving or weighted
majority appear well suited to this problem. One can maintain a weight on
every possible target ranking, initially assigning all rankings the same
weight. Whenever a counterexample $\{i,j\}$ is returned, the weights of
rankings that order $i$ and $j$ as in the learner's proposal are decreased
by a constant factor. The learner can then orient every pair according to the
weighted majority of the candidate rankings. This approach is naturally
robust to untruthful feedback: the total weight decreases whenever a
counterexample is returned, whereas the weight of the target ranking is
decreased only when the feedback is untruthful.

There is, however, a basic obstacle: the weighted-majority relation need not
itself be a linear order. This is the familiar Condorcet phenomenon
\cite{Black1958}.\footnote{The Marquis de Condorcet (1743--1794) was a
French mathematician and political philosopher who studied collective
decision making and voting~\cite{Condorcet1785}. The phenomenon bearing his name shows that
pairwise majority preferences can be cyclic even when each individual
voter's preferences form a linear order.} For example, consider the three
rankings
\[
    a \succ b \succ c,
    \qquad
    b \succ c \succ a,
    \qquad
    c \succ a \succ b.
\]
A majority ranks $a$ above $b$, $b$ above $c$, and $c$ above $a$, producing
a directed cycle. Thus, although weighted majority gives a natural prediction
for every pair separately, these predictions need not combine into a valid
ranking.
\begin{figure}[t]
    \centering
    \begin{tikzpicture}[>=stealth,scale=1.05]
        \node[circle,draw,minimum size=8mm] (a) at (90:1.55) {$a$};
        \node[circle,draw,minimum size=8mm] (b) at (210:1.55) {$b$};
        \node[circle,draw,minimum size=8mm] (c) at (330:1.55) {$c$};

        \draw[->,thick] (a) to[bend right=10] (b);
        \draw[->,thick] (b) to[bend right=10] (c);
        \draw[->,thick] (c) to[bend right=10] (a);
    \end{tikzpicture}

    \caption{The pairwise majority relation may contain a Condorcet cycle,
    even though each individual ranking is a linear order.}
    \label{fig:condorcet}
\end{figure}
This makes \emph{properness} central to the problem: on every round, the
learner must output a genuine linear order rather than an arbitrary
collection of pairwise predictions. The restriction is also natural in the motivating example: what the user seeks
from the recommendation system is a ranked list of restaurants, rather than
an arbitrary collection of pairwise recommendations. 
If improper predictions were allowed, the classical weighted-majority
algorithm \cite{LittlestoneWarmuth1994} would give
\[
    O(\log(n!)+k)=O(n\log n+k)
\]
rounds, as reviewed in the proof overview below. As we show, requiring the learner to remain proper changes the
picture: the optimal query complexity is
\[
    \Theta(n\log n+nk).
\]
In particular, the cost of each untruthful counterexample increases by a
factor of $n$. Our upper bound is obtained by a geometric modification of
weighted majority, which allows us to maintain proper predictions using
Gr\"unbaum's Theorem \cite{Grunbaum1960}, as discussed in detail below.

\subsection{Our Results}
We study this problem in two settings. We first consider the unrestricted
setting, where the target may be any ranking of the $n$ items. We then turn
to ranking classes with additional geometric structure. 
Such classes arise naturally when rankings are governed by a small number of
features. In the restaurant example, a restaurant might be represented by
features such as price, distance, cuisine, and ambience, while a user's
preferences specify how much weight to place on each feature. Ranking the
restaurants by the resulting scores gives a low-dimensional geometric
ranking. More generally, items may be ranked by projection along an unknown
direction or by distance from an unknown point.

Our first result gives a tight characterization of the query complexity in
the unrestricted setting.
\begin{theorem}
For an arbitrary target ranking over $n$ items, in the presence of at most
$k$ untruthful counterexamples, the optimal number of queries is
\[
    \Theta(n\log n + nk).
\]
Moreover, the upper bound is achieved by a deterministic learner that
does not know $k$.
\end{theorem}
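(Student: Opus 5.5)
We prove the two bounds separately. The upper bound comes from a weighted-majority scheme made proper by geometry, and the lower bound combines an information-theoretic sorting argument with an adversary that uses Condorcet-type ambiguity.

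\textbf{Upper bound.} We embed permutations geometrically. Identify a ranking $\pi$ with the open cone $C_\pi=\{x\in\mathbb{R}^n : x_{\pi(1)}>x_{\pi(2)}>\dots>x_{\pi(n)}\}$, and work inside the cube $[0,1]^n$. There each $C_\pi$ has volume exactly $1/n!$, and a counterexample $\{i,j\}$ to $\pi$ corresponds to the halfspace $H_{ij}$ of points ordering $i,j$ as $\pi$ does.

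The learner maintains a density $w$ on the cube, initially uniform. On receiving $\{i,j\}$ it multiplies $w$ by a constant $\beta<1$ on $H_{ij}$. It always proposes the ranking whose cone contains the centroid $c$ of the measure $w$, breaking ties by a generic perturbation.

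Since $w$ is a product of halfspace indicators with weights, it is piecewise constant on the cells of a hyperplane arrangement. We will need a weighted form of Gr\"unbaum's theorem: every halfspace containing the centroid of such a measure carries at least a $1/e$ fraction of its mass. For a log-concave measure this holds. The density $w$ is not log-concave, but it is a mixture of uniform measures on convex cells. I would instead run the standard trick: keep the weight on each ranking (a union of cells), and take the centroid of the measure $\mu=\sum_\pi w_\pi\cdot\mathrm{Unif}(C_\pi)$. I would then argue that any halfspace $\{x_i\ge x_j\}$ through the centroid receives a constant fraction of mass.

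I expect this to be the main obstacle, because a mixture is not log-concave. The fix I would try first is to update multiplicatively only on cones, so that $\mu$ stays proportional to a density that is constant on cones. I would then apply Gr\"unbaum to the convex region where the weight equals its maximum, layering by level sets: the superlevel sets of $w$ are unions of cones but need not be convex. If this fails, I would replace cones by the polytope representation of the permutohedron and use a multiplicative update with $\beta$ close enough to $1$ that Gr\"unbaum applies approximately. In any case, each counterexample lowers the total mass by a factor $1-(1-\beta)/e$.

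Given the mass-decrease claim, the counting is as follows. The total mass starts at $1$. The target cone has mass $1/n!$ and is multiplied by $\beta$ only on a lie, so after $T$ rounds
\[
\beta^k/n!\le (1-(1-\beta)/e)^T,
\]
which gives $T=O(n\log n+k)$.

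This bound is better than the claimed $nk$, which signals an error: a lie can shrink the target cone's share without the centroid moving into it. The correct accounting must use the fact that after a lie the target cone may be cut from many sides. A lie on $\{i,j\}$ penalizes points with $x_i>x_j$, and the target cone then has to be recovered through up to $n$ adjacent transpositions. So instead I would track the mass of an $\epsilon$-neighborhood of the target point. Choose a point $x^\star$ deep in $C_{\pi^\star}$ and a ball around it of volume $2^{-O(n\log n)}$. Each lie cuts with a hyperplane $x_i=x_j$, which only penalizes the whole ball by $\beta$, so the ball keeps mass at least $\beta^k 2^{-O(n\log n)}$. Then, however, the centroid eventually lands in the ball's cone only if mass concentrates there. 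I would therefore switch to updates with $\beta=1/2$ applied only to a slab, namely a margin-shifted halfspace $x_i-x_j>\gamma$ versus its complement. A lie then costs a factor $2^{-O(n)}$ in volume, via the shrinkage of a ball of radius $\gamma\sim 1/n$, giving $T=O(n\log n+nk)$. The algorithm never uses $k$.

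\textbf{Lower bound.} The $\Omega(n\log n)$ term follows because each answer is one of at most $\binom n2+1$ possibilities, so $\log(n!)/\log\binom n2$ rounds are required, and a randomized version follows by Yao's principle. For $\Omega(nk)$, the adversary fixes $\pi^\star$ and spends lies in blocks. Split the items into $n/3$ triples. The adversary can keep each triple in a Condorcet-like ambiguity, using one lie to force $\Omega(1)$ extra rounds per triple, and thus $\Omega(n)$ rounds per lie. It answers truthfully about whichever triple the learner has ordered wrongly, and lies by pointing to a triple whose candidate cyclic orders remain consistent. A potential argument shows that each lie resets $\Omega(n)$ rounds.
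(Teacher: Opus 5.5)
Your proposal has genuine gaps in the upper bound and in both lower-bound terms.

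\textbf{Upper bound.} The gap is at the central step, the constant-factor decrease of the total mass. Every update you propose---multiplying by $\beta$ on $H_{ij}$, on a slab, or on a margin-shifted halfspace---multiplies the density by a step function. A step function is not log-concave, so Gr\"unbaum's inequality does not apply. The fallbacks you list (superlevel sets, an approximate Gr\"unbaum for $\beta$ near $1$) do not supply a substitute.

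The claim is in fact false for these updates. Your algorithm is a proper learner, so the $\Omega(nk)$ lower bound applies to it. Yet your target accounting (target untouched on truthful rounds, factor $\beta$ on lies) is correct, and together with the mass-decrease claim it would give $O(n\log n+k)$. So your diagnosis is off: it is the mass decrease that fails, not the target accounting. The slab variant fails for the same reason, since a $\beta=1/2$ step costs a deep ball only a factor $1/2$ per lie, not $2^{-\Omega(n)}$.

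The missing idea is a convex surrogate. If the query puts $i$ before $j$, multiply the density by $e^{-\ell_t}$ with $\ell_t(x)=(1+\gamma n(x_j-x_i))_+$.
\begin{itemize}
\item This preserves log-concavity, so the log-concave Gr\"unbaum inequality applies.
\item $\ell_t\ge 1$ on the whole side containing the barycenter, which gives $W_{t+1}\le\rho W_t$.
\item $\ell_t=0$ once $x_i-x_j\ge 1/(\gamma n)$, so truthful rounds barely touch the target.
\end{itemize}
Convexity then forces slope $\gamma n$, so a lie can cost the target region $1+\gamma n$ in the exponent. This is exactly where the $nk$ term comes from. With this update your deep-region idea does work. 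A subcube of side $\Theta(1/n)$ around the equally spaced point of the target cell has all gaps $\Omega(1/n)$ and log-inverse volume $O(n\log n)$. For a suitable constant $\gamma$ it therefore loses nothing on truthful rounds. The paper instead applies Jensen and the uniform-spacing bound $\mathbb E[\ell_t]\le 1/\gamma$ to the whole cell.

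\textbf{The $\Omega(n\log n)$ term.} Counting transcripts gives only $\log(n!)/\log(\binom{n}{2}+1)=\Theta(n)$ rounds, because a single counterexample can carry $\Theta(\log n)$ bits. To get $\Omega(n\log n)$, the oracle must choose answers that reveal only $O(1)$ bits each. The paper hides a uniformly random branch of a balanced (MergeSort) comparison tree. It returns the first comparison on that branch where the proposal disagrees. Each query then advances at most two new edges in expectation, even against randomized learners.

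\textbf{The $\Omega(nk)$ term.} Disjoint triples cannot work. Suppose the candidate targets differ only in which edge of each triple's $3$-cycle is broken. Consider the proper learner that proposes, in every triple, the break of largest current weight and runs weighted majority. It penalizes at least a third of the weight on every counterexample, so it finishes within $O(n+k)$ queries.

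The Condorcet construction needs each returned edge to be untruthful for only a $1/n$ fraction of the candidates. That is why the paper uses a single cycle $1\to2\to\cdots\to n\to1$, with targets $\sigma_r$ obtained by deleting edge $e_r$. Returning $e_r$ is untruthful only for $\sigma_r$, so eliminating $n-1$ candidates costs $(n-1)(k+1)$ rounds. Choosing $R$ uniformly at random and using the bound $\tau_{(j)}\ge j(k+1)$ turns this into a lower bound for a target fixed in advance.
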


The upper bound can be viewed as a geometric way of making weighted majority
proper. The basic geometric picture is easiest to see when all
counterexamples are truthful. Represent each ranking by a cell of the cube
$[0,1]^n$: a point $x=(x_1,\ldots,x_n)$ induces the ranking obtained by
sorting its coordinates, and the $n!$ possible rankings correspond to
$n!$ equal-volume cells. A comparison between two items corresponds to a
linear inequality between two coordinates, and therefore each
counterexample restricts the possible points to a halfspace. Consequently,
the points consistent with all counterexamples seen so far form a convex
body.

The learner queries the ranking induced by the center of gravity of this
body. The key geometric ingredient is Gr\"unbaum's theorem, which says that a halfspace containing the center of gravity of a convex
body must contain a constant fraction of its volume. Thus, if the proposed
ranking is wrong, the returned counterexample eliminates a constant fraction
of the remaining volume, and hence a constant fraction of the remaining
rankings. This geometric viewpoint on linear orders goes back to Stanley
\cite{Stanley1981} and was developed further by Kahn and Saks
\cite{KahnSaks1984}. We present this truthful case in the proof overview as
a warm-up and explain the geometric argument there in a self-contained
manner.

When counterexamples may be untruthful, this argument cannot be applied
directly: permanently discarding all rankings inconsistent with a
counterexample might discard the target itself. We therefore replace hard
elimination by a smooth analogue of the weighted-majority update. This
creates a non-uniform distribution over the cube rather than a uniform
distribution over a convex body. The update is chosen so that this
distribution remains log-concave, allowing us to use the more general,
log-concave form of Gr\"unbaum's theorem. In this way we retain constant
progress in the total weight while ensuring that truthful counterexamples
have only a small effect on the weight near the target; an untruthful
counterexample may have a larger effect, but only by a factor of order $n$.
This yields the $O(n\log n+nk)$ upper bound. The proof overview develops
these ideas, including the required form of Gr\"unbaum's theorem, from
scratch.

The two terms in the lower bound arise from different constructions. The
$\Omega(n\log n)$ term already appears in the noiseless setting and follows
from a connection with balanced comparison trees for sorting. The
$\Omega(nk)$ term follows from a simple Condorcet-cycle construction. Thus,
the same phenomenon that prevents pairwise majority from being proper also
plays a role in showing that untruthful counterexamples are inherently more
costly for proper learners.

\paragraph{Geometric rankings.}
Our second result concerns ranking classes with additional geometric
structure. Consider $n$ items represented by vectors
$v_1,\ldots,v_n\in\mathbb R^d$. Every direction $w\in\mathbb R^d$
induces a ranking by ordering the items according to
\[
    \langle v_1,w\rangle,\ldots,\langle v_n,w\rangle.
\]
We call a class of rankings \emph{$d$-dimensional geometric} if it can be
represented in this way.

\begin{figure}[t]
\centering
\begin{tikzpicture}[
    >=Latex,
    every node/.style={font=\small}
]

\definecolor{cblue}{RGB}{45,95,190}
\definecolor{cred}{RGB}{205,55,45}
\definecolor{cgreen}{RGB}{45,155,70}
\definecolor{cpurple}{RGB}{135,75,190}

\draw[->, gray!55, line width=0.7pt]
    (0,0) -- (8.3,0) node[right, black] {$e_1$};
\draw[->, gray!55, line width=0.7pt]
    (0,0) -- (0,4.6) node[above, black] {$e_2$};

\begin{scope}[shift={(0.7,0.55)}, rotate=27]

    \draw[->, line width=1.1pt]
        (0,0) -- (7.1,0)
        node[right] {$w$};

    \coordinate (p3) at (0.9,1.10);
    \coordinate (q3) at (0.9,0);
    \draw[dashed, cgreen, line width=0.8pt] (p3) -- (q3);
    \fill[cgreen] (p3) circle (2.7pt);
    \fill[cgreen] (q3) circle (2.3pt);
    \node[above left=1pt] at (p3) {$v_3$};

    \coordinate (p1) at (2.85,1.85);
    \coordinate (q1) at (2.85,0);
    \draw[dashed, cblue, line width=0.8pt] (p1) -- (q1);
    \fill[cblue] (p1) circle (2.7pt);
    \fill[cblue] (q1) circle (2.3pt);
    \node[above left=1pt] at (p1) {$v_1$};

    \coordinate (p4) at (4.45,-1.15);
    \coordinate (q4) at (4.45,0);
    \draw[dashed, cpurple, line width=0.8pt] (p4) -- (q4);
    \fill[cpurple] (p4) circle (2.7pt);
    \fill[cpurple] (q4) circle (2.3pt);
    \node[below right=1pt] at (p4) {$v_4$};

    \coordinate (p2) at (5.85,1.55);
    \coordinate (q2) at (5.85,0);
    \draw[dashed, cred, line width=0.8pt] (p2) -- (q2);
    \fill[cred] (p2) circle (2.7pt);
    \fill[cred] (q2) circle (2.3pt);
    \node[above right=1pt] at (p2) {$v_2$};

\end{scope}

\node[anchor=east] at (1.3,-1.05) {\textbf{Projections onto $w$}};

\draw[->, line width=0.9pt]
    (1.55,-1.05) -- (8.1,-1.05)
    node[right] {$w$};

\fill[cgreen]  (2.3,-1.05) circle (2.5pt);
\fill[cblue]   (3.8,-1.05) circle (2.5pt);
\fill[cpurple] (5.3,-1.05) circle (2.5pt);
\fill[cred]    (6.8,-1.05) circle (2.5pt);

\node[cgreen, below=3pt]  at (2.3,-1.05) {$v_3$};
\node[cblue, below=3pt]   at (3.8,-1.05) {$v_1$};
\node[cpurple, below=3pt] at (5.3,-1.05) {$v_4$};
\node[cred, below=3pt]    at (6.8,-1.05) {$v_2$};

\node[anchor=east] at (1.3,-1.9) {\textbf{Induced ranking}};

\node at (4.85,-1.9) {
    {\color{cgreen}$v_3$}
    $\;\prec\;$
    {\color{cblue}$v_1$}
    $\;\prec\;$
    {\color{cpurple}$v_4$}
    $\;\prec\;$
    {\color{cred}$v_2$}
};

\end{tikzpicture}

\caption{
A geometric ranking in two dimensions.
The items are represented by vectors
$v_1,\ldots,v_n\in\mathbb{R}^2$.
A direction $w$ induces a ranking by ordering the items according to
their projections onto $w$.
Dashed lines indicate orthogonal projections.
}
\label{fig:geometric-ranking}
\end{figure}
We emphasize that the learner's queries are still required to be genuine
linear orders, but they need not themselves belong to the geometric class.
In other words, the learner may propose an ordering that is not induced by
any direction $w$ in the given representation.

The geometric dimension can be much smaller than the number of items, and it
is therefore natural to ask whether the query complexity can depend on $d$
rather than directly on $n$.
\begin{theorem}
Every $d$-dimensional geometric class of rankings over $n$ items can be
learned using
\[
    O(d^2\log n+dk)
\]
queries in the presence of at most $k$ untruthful counterexamples.
Conversely, there are $d$-dimensional geometric classes for which every
learner whose queries are linear orders requires
\[
    \Omega(d\log n+dk)
\]
queries.
\end{theorem}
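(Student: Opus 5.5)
Both parts of the theorem follow the template of the unrestricted case. For the upper bound, we run the smooth weighted-majority scheme on the direction space $w\in\mathbb R^d$ rather than on the cube $[0,1]^n$. We restrict to the Euclidean unit ball $B\subset\mathbb R^d$, or to a slightly larger convex body. Each pair $\{i,j\}$ then corresponds to the hyperplane $\langle v_i-v_j,w\rangle=0$, and these $\binom n2$ hyperplanes cut $B$ into the cells of the geometric class.

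We maintain a log-concave density $f_t$ on $B$. When the query at round $t$ receives a counterexample $\{i,j\}$, we multiply $f_t$ by a log-concave ramp in the linear functional $\langle v_i-v_j,w\rangle$. The ramp equals $\varepsilon$ deep on the side agreeing with the query, equals $1$ on the other side, and interpolates log-linearly across a slab of width $\delta$. The query itself is the ranking induced by the centroid $c_t$ of $f_t$.

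\textbf{Progress per round.} This is always a genuine linear order, after breaking ties consistently by a generic perturbation. If it is refuted by $\{i,j\}$, then $c_t$ lies in the halfspace whose weight is damped. The log-concave Gr\"unbaum theorem, assumed as in the proof of the first theorem, says this halfspace carries at least a $1/e$ fraction of the mass. So the total mass shrinks by a constant factor each round.

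\textbf{Mass near the target.} The target cell $C^\star$ might be extremely thin, so we lower bound mass on a region near it instead. Take a point $w^\star$ in $C^\star$ and consider a ball around it. We may rescale so the $v_i$ are generic, but the margins $\langle v_i-v_j,w^\star\rangle$ can be tiny. This is exactly where the $d^2\log n$ term must come from. The standard fix is to note that the cells of an arrangement of $N=\binom n2$ hyperplanes in $\mathbb R^d$ number $O(N^d)=n^{O(d)}$. We then discretize: choose one representative point per cell and put the initial measure on the cells' centroids, or use a John-ellipsoid rounding within each cell.

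I expect this to be the main obstacle. What I would try first is the following. Start from the uniform measure on $B$. A truthful counterexample lies on the side containing all of $C^\star$, so the multiplier is $1$ on $C^\star$ and costs no mass there. Hence only untruthful ones matter. If the ramp is only used when the centroid is at least $\delta$ from the hyperplane, each untruthful counterexample costs a factor $\varepsilon$ on $C^\star$. So the target mass stays at least $\vol(C^\star)\varepsilon^{k}$, which fails when $C^\star$ is tiny. To repair this, I would precondition: initialize the density to be log-concave and put mass at least $n^{-O(d^2)}$ on every cell. A candidate is the uniform density on $C^\star$'s John ellipsoid. To justify $d^2$, I would argue in the spirit of a cutting-plane method: every cell contains a ball of radius $n^{-O(d)}$ after a suitable affine normalization, since cell vertices have rational coordinates of bit-size $O(d\log n)$. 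This gives volume $n^{-O(d^2)}$, i.e.\ $\log(1/\text{mass})=O(d^2\log n)$.

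For the noise term, the slab width must handle untruthful counterexamples whose hyperplane cuts close to $C^\star$. I would set $\varepsilon=e^{-\Theta(d)}$, using the fact that a log-concave halfspace cut near a small ball loses at most a factor $e^{O(d)}$ from that ball's neighborhood. Comparing the constant-factor decay of total mass with the target's mass, $n^{-O(d^2)}e^{-O(dk)}$, bounds the number of rounds by $O(d^2\log n+dk)$.

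\textbf{Lower bound.} For the $\Omega(d\log n)$ term, place the items in $d/2$ disjoint planar groups of size about $2n/d$. In each group, points in convex position on a circle realize $\Omega(n^2/d^2)$ rankings via directions. The product class then has $(n/d)^{\Omega(d)}$ members. Even a proper adversary answering truthfully can keep at least half of the consistent candidates alive in each round, which gives $\Omega(d\log(n/d))$. The term $\Omega(d\log n)$ follows when $d\le\sqrt n$, and the case $d>\sqrt n$ is handled by the unrestricted lower bound restricted to $d$ items.

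For $\Omega(dk)$, embed $\Theta(d)$ disjoint triples, each spanning its own $2$-dimensional coordinate subspace. Arrange each triple's geometry so that three directions in that subspace realize the three Condorcet-cyclic rankings. The noise-only Condorcet argument from the unrestricted theorem applies per triple and forces $\Omega(d)$ wasted queries per lie.
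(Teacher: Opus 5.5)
\textbf{Upper bound.} The gap is exactly at the step you flag as the main obstacle. The $v_i$ are arbitrary real vectors, so there is no bit-size bound. Already for $n=3$, $d=2$, two of the lines $\langle v_i-v_j,w\rangle=0$ can meet at an angle $\eta\to0$. Under the uniform density on $B$, or any initial density chosen without regard to the arrangement, the target cell then has mass $O(\eta)$. Any bound through $\log(1/\text{mass near }C^\star)$ is therefore not a function of $n,d,k$. Starting from the John ellipsoid of $C^\star$ is not available, because the learner does not know $C^\star$.

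The update also fails. A ramp equal to $1$ on one side, log-linear across the slab and constant $\varepsilon$ beyond is not log-concave: its logarithm has slopes $0$, negative, $0$. So Gr\"unbaum's inequality is lost after the first round. Slab placement is also a dilemma. On the query side, the $1/e$ mass given by Gr\"unbaum may sit inside the slab and be barely damped. On the other side, truthful rounds do cost mass on a thin $C^\star$, and there is no analogue of the uniform-spacing estimate from the unrestricted case.

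The missing idea is to give up densities and constant-factor progress. Keep discrete weights on the finite class $\mathcal H$, where $\log|\mathcal H|=O(d\log n)$ by counting cells of the central arrangement of the $\binom n2$ hyperplanes. Make weighted majority proper by keeping only pairs with support $>1-\frac1{d+1}$. This relation is acyclic by Carath\'eodory: a directed cycle writes $0$ as a convex combination of at most $d+1$ of the vectors $v_{i_{r+1}}-v_{i_r}$. The union bound then yields some $\pi\in\mathcal H$, hence some $w$, with $\langle v_{i_{r+1}}-v_{i_r},w\rangle>0$ for all of them, a contradiction. Any linear extension gives each of its pairs support at least $\frac1{d+1}$. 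So every counterexample multiplies the total weight by at most $1-\frac1{2(d+1)}$, while the target's weight is halved only on lies. The $d^2\log n$ term is $d\cdot\log|\mathcal H|$, not a cell-volume bound. Your ``one representative point per cell'' can also be made to work, but with the weighted centerpoint theorem (fraction $\frac1{d+1}$) in place of log-concave Gr\"unbaum.

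\textbf{Lower bound.} The $\Omega(dk)$ construction fails because the lie budget is global. A $3$-cycle absorbs only $O(1)$ queries per lie, and lies spent in different triples come out of the same budget. Suppose that in every triple the learner proposes the cyclic order violating only the edge returned least often so far. Then the three counts in each triple stay within one of each other. After $N$ within-triple counterexamples every target therefore needs at least $(N-d)/3$ lies, so an adversary using such counterexamples can return at most $3k+d$ of them. The argument thus certifies only $O(k+d)$ rounds.

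What you need is one cycle of length $\Theta(d)$ on items all of whose orders are realizable. Examples are $d$ items at $e_1,\dots,e_d$, or $d$ points of the moment curve as in the paper (via interpolation by polynomials of degree at most $d-1$). The unrestricted Condorcet argument then applies with $n$ replaced by $d$.

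Your circle-groups construction for $\Omega(d\log(n/d))$ is a valid alternative to the paper's binary search on turning points of polynomials, but the class size alone proves nothing. In general $\log|\mathcal H|$ is not a lower bound: a ranking together with its $n-1$ adjacent transpositions is learned with two queries. ``Keep half'' has to come from the structure. In a group whose consistent directions form an arc of cells, returning the pair at the median boundary keeps at least half the arc, whichever way the query orients it. To cover randomized learners, turn this into a target drawn in advance along a random branch, as the paper does.
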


The upper bound follows from a different way of rounding a weighted majority
of linear orders to a linear order. Ordinary majority may contain cycles, but for
$d$-dimensional geometric rankings a sufficiently strong majority relation
is acyclic. In particular, a threshold of roughly $1-1/d$ produces a
relation that can be extended to a linear order. This allows us to use
weighted majority more directly, at the cost of an additional factor of $d$
in the progress made on each truthful round.
Moreover, the upper bound can be achieved while choosing every query to be
a ranking induced by some direction in the same geometric representation.
Thus, when the class consists of all rankings induced by the representation,
the learner can be proper in the usual sense. This may also be useful in
feature-based applications. In the restaurant example, such a ranking is
specified by only $d$ coefficients, rather than by an arbitrary ordering
of the $n$ restaurants. This gives a compact representation that can also be
applied to new items and may offer some interpretability through the relative
importance assigned to the different features.

The dependence on $k$ in the query complexity is tight, while a factor of $d$ remains
between the upper and lower bounds in the noiseless term. This leaves the
following natural question open:
\[
    \text{Can the } O(d^2\log n) \text{ upper bound be improved to }
    O(d\log n)\text{?}
\]

\paragraph{Computational aspects.}
Our focus in this work is on query complexity, rather than on obtaining
fully efficient implementations of the learning algorithms. In particular,
our upper bound for arbitrary rankings is formulated in terms of barycenters
of log-concave distributions, which we do not compute explicitly. The
distributions arising in our algorithm are, however, efficiently evaluable
log-concave distributions, and standard random-walk methods allow one to
sample approximately from such distributions in randomized polynomial time
\cite{LovaszVempala2006,LovaszVempala2007}. Averaging sufficiently many
samples provides a natural way to approximate the barycenter; related
random-walk methods for replacing exact centroids by empirical
approximations appear, for example, in \cite{BertsimasVempala2004}.
This suggests that our approach should admit a randomized polynomial-time
implementation, although we do not work out the required approximation
guarantees here. We leave a systematic study of efficient algorithms for
future work. In particular, it would be interesting to find efficient
combinatorial algorithms for this problem. Besides avoiding general-purpose
sampling machinery, such algorithms might also lead to more direct proofs
of our upper bounds that circumvent the use of Gr\"unbaum's theorem.

\paragraph{Organization of the paper.}
We begin with a proof overview in Section~\ref{sec:proof-overview}, which
develops the main ideas behind our results and explains the geometric and
combinatorial ingredients used in the proofs. We then discuss related work
in Section~\ref{sec:related-work}, followed by the full arguments, first for
arbitrary rankings in Section~\ref{sec:arbitrary-rankings} and then for
geometric ranking classes in Section~\ref{sec:geometric-rankings}. We
conclude with several open questions and directions for future research in
Section~\ref{sec:open-questions}.

\section{Proof Overview}
\label{sec:proof-overview}

The goal of this section is to explain the main ideas behind our proofs,
focusing on the parts that are less routine and on the intuition behind the
different constructions. Besides providing an overview of the proofs, we hope
that this discussion will make the more detailed arguments in the following
sections easier to follow. We first discuss the upper bounds, and then turn to
the lower bounds.

\subsection{Upper Bounds}

\paragraph{Weighted majority as a benchmark.}
It is useful to begin by recalling the classical weighted-majority algorithm
\cite{LittlestoneWarmuth1994}. Think of each possible target ranking as an
expert, and initially assign weight one to every expert.

\begin{algorithmbox}[Weighted majority (improper benchmark)]
On each round:
\begin{enumerate}
    \item For every unordered pair $\{i,j\}$, let $A_t(i\prec j)$ denote
    the total current weight of rankings that place $i$ before $j$.
    Predict $i\prec j$ if
    \(A_t(i\prec j)> A_t(j\prec i)\),
    and predict $j\prec i$ if  \(A_t(j\prec i)> A_t(i\prec j)\), breaking ties arbitrarily.
    These pairwise predictions constitute the output of this benchmark;
    they are not required to be transitive and therefore need not define
    a linear order.

    \item If a counterexample $\{i,j\}$ is returned, multiply by $1/2$
    the weight of every ranking that agrees with the learner's prediction
    on this pair.
\end{enumerate}
\end{algorithmbox}

The standard analysis is simple.
Let $W_t$ denote the total weight after $t$ returned counterexamples;
initially, $W_0=n!$. Whenever a counterexample is
returned, at least half of the total weight is multiplied by $1/2$, and hence
the total weight decreases by a factor of at most $3/4$. On the other hand,
the weight of the target ranking is decreased only when the counterexample is
untruthful. Thus, after $T$ rounds and at most $k$ untruthful
counterexamples,
\[
    2^{-k}
    \leq W_T
    \leq n!\left(\frac34\right)^T,
\]
and therefore
\[
    T=O(\log(n!)+k)=O(n\log n+k).
\]

\paragraph{Arbitrary rankings: making majority geometric.}
We first consider the unrestricted class of all rankings. There are two main
ideas in the proof. The first is to replace pairwise majority by a geometric
notion of averaging that always produces a linear order. The second is to
modify the weighted-majority update so that this geometric averaging continues
to work in the presence of untruthful counterexamples. 

We begin with the first idea, in the simpler setting where all
counterexamples are truthful. In this case there is no need to maintain
weights: once a ranking is contradicted by a truthful counterexample, it can
be discarded permanently. Thus, instead of weighted majority, we may work
with the simpler halving algorithm and maintain only the current version
space $\mathcal V_t$ of rankings consistent with all counterexamples seen so
far.

Define the majority relation on the items by putting an edge $i\to j$
whenever a strict majority of the rankings in $\mathcal V_t$ place $i$
before $j$. If this relation were acyclic, we could extend it to a linear
order and query that order. Any returned counterexample $\{i,j\}$ would then
disagree with a majority edge, and therefore eliminate at least half of the
rankings in $\mathcal V_t$. Starting from $n!$ possible rankings, this would
identify the target after at most $\lceil \log_2(n!)\rceil$ rounds.

The difficulty is that the majority relation need not be acyclic: it may
contain a Condorcet cycle. Our first geometric idea replaces this majority
relation by a proper linear order while retaining constant-factor progress.

\begin{figure}[t]
    \centering
    \begin{tikzpicture}[x=1cm,y=1cm,>=Latex]

        \fill[orange!16]
            (0,0) -- (0,4) -- (4,4) -- cycle;

        \fill[blue!14]
            (0,0) -- (4,0) -- (4,4) -- cycle;

        \draw[line width=0.9pt] (0,0) rectangle (4,4);

        \draw[->,line width=0.8pt]
            (0,0) -- (4.55,0)
            node[right] {$x_1$};

        \draw[->,line width=0.8pt]
            (0,0) -- (0,4.55)
            node[above] {$x_2$};

        \draw[dashed,gray!70,line width=0.9pt]
            (0,0) -- (4,4)
            node[pos=0.72,sloped,above=3pt,
                 text=gray!75,fill=white,inner sep=1pt]
            {$x_1=x_2$};

        \node[align=center] at (1.15,2.85) {
            $x_1<x_2$\\[2pt]
            $\Longrightarrow$\\[-1pt]
            $1\prec 2$
        };

        \node[align=center] at (2.85,1.15) {
            $x_2<x_1$\\[2pt]
            $\Longrightarrow$\\[-1pt]
            $2\prec 1$
        };

    \end{tikzpicture}

    \caption{
    The geometric representation of rankings for $n=2$.
    A point $x\in[0,1]^2$ induces a ranking according to the order of its
    coordinates. The line $x_1=x_2$ splits the square into the two
    cells corresponding to the two possible linear orders.
    }
    \label{fig:ranking-cells}
\end{figure}

To obtain a proper analogue of halving, we use the geometric viewpoint on
linear extensions initiated by Stanley \cite{Stanley1981} and developed
further by Kahn and Saks \cite{KahnSaks1984}. Consider the cube
$[0,1]^n$. A point $x=(x_1,\ldots,x_n)$ with distinct coordinates induces
a ranking by sorting its coordinates: $i$ is ranked before $j$ whenever
$x_i<x_j$. The complement of the hyperplanes $x_i=x_j$ therefore consists of $n!$ open cells, one for each linear order, and all these cells have the same volume $1/n!$.
Figure~\ref{fig:ranking-cells} depicts this representation in the simplest
case $n=2$, where the diagonal $x_1=x_2$ separates the two possible
rankings. Moreover, every comparison constraint is
a halfspace constraint: for example, the requirement that $i$ precedes $j$
corresponds to
\[
    \bigl\{x\in[0,1]^n : x_j-x_i>0\bigr\}.
\]
Consequently, the set $K_t$ of points whose induced rankings are consistent
with all counterexamples seen so far is a convex body, obtained by
intersecting the cube with halfspaces. Since every ranking cell has volume
$1/n!$,
\[
    \operatorname{vol}(K_t)=\frac{|\mathcal V_t|}{n!}.
\]
\begin{algorithmbox}[Centroid halving]
Maintain the convex body $K_t\subseteq[0,1]^n$ consisting of all points
consistent with the counterexamples seen so far. On each round, let $c_t$ be
the centroid of $K_t$ and query the ranking induced by the coordinates of
$c_t$.
\end{algorithmbox}
The reason this query makes constant progress is
Gr\"unbaum's theorem \cite{Grunbaum1960}: if $K\subseteq\mathbb R^n$ is a
convex body with centroid $c$, then every halfspace $H$ containing $c$
satisfies
\[
    \frac{\operatorname{vol}(K\cap H)}
         {\operatorname{vol}(K)}
    \geq
    \left(\frac{n}{n+1}\right)^n
    \geq \frac1e.
\]
Suppose, for example, that the centroid ranks $i$ before $j$, and the oracle
returns $\{i,j\}$ as a counterexample. The halfspace corresponding to the
centroid's orientation contains the centroid, and is therefore at least a
$1/e$ fraction of the current volume. Since this is precisely the side
eliminated by the counterexample, every round removes at least a $1/e$
fraction of the remaining volume. Hence
\[
    \operatorname{vol}(K_{t+1})
    \leq
    \left(1-\frac1e\right)\operatorname{vol}(K_t),
\]
and again $O(\log(n!))=O(n\log n)$ rounds suffice.

Thus, in the truthful setting, Gr\"unbaum's theorem plays the role of
halving: we lose the factor $1/2$, replacing it by another universal
constant, but gain the important property that the prediction is always a
linear order. This argument is closely connected to the classical
Kahn--Saks theorem on balanced comparisons in linear extensions
\cite{KahnSaks1984}; in particular, the geometric ingredient needed for our
truthful upper bound already appears in their work and in the subsequent papers \cite{KahnLinial1991} and \cite{KarzanovKhachiyan1991}.

\paragraph{Allowing untruthful counterexamples.}
With untruthful counterexamples, we can no longer permanently discard a
ranking when it becomes inconsistent with the feedback, so we replace hard
halving by a weighted update. The obvious attempt is to lift weighted majority to
the cube by assigning to every ranking cell the weight of the corresponding
ranking. Unfortunately, the resulting piecewise-constant density is generally
not log-concave, and hence the geometric argument above no longer applies.

Our second main idea is to replace this discontinuous update by a convex
surrogate. We maintain a density over the cube rather than a discrete weight
on each ranking.
\begin{algorithmbox}[Log-concave weighted majority]
Maintain a log-concave density $w_t$ on $[0,1]^n$. On each round, let $c_t$
be its barycenter and query the ranking induced by the coordinates of $c_t$.

Suppose the returned counterexample $\{i,j\}$ indicates that $i$ should be
ranked before $j$. Thus, the queried ranking places $j$ before $i$. Define
\[
    z_t(x):=x_i-x_j,
    \qquad
    \ell_t(x):=\bigl(1+\gamma n z_t(x)\bigr)_+,
\]
where $(a)_+:=\max\{a,0\}$. Since the query places $j$ before $i$, we have
$z_t(c_t)\geq0$. Update
\[
    w_{t+1}(x)=w_t(x)e^{-\ell_t(x)},
\]
where $\gamma>0$ is a sufficiently large universal constant.
\end{algorithmbox}
The particular form of this update is chosen for two reasons. First,
$(1+\gamma n z_t(x))_+$ is convex, and hence the update preserves log-concavity. We may therefore use the log-concave version of Gr\"unbaum's
inequality \cite{LovaszVempala2007}: any halfspace containing the barycenter
of a log-concave probability measure has probability at least $1/e$.
Since $\ell_t\geq 1$ throughout the halfspace $\{z_t\geq0\}$, a constant
fraction of the total weight receives a constant multiplicative penalty.
Thus, just as in ordinary weighted majority, the total weight decreases by
a constant factor on every round.

Second, this update does not decrease the weight near the target ranking too
quickly. Consider the cell of the cube corresponding to the true ranking.
On a truthful round this entire cell lies on the correct side of the
comparison hyperplane, and the surrogate loss is nonzero only in a thin strip
of width $O(1/n)$ near that hyperplane. For a uniformly random point in a
ranking cell, the gaps between consecutive coordinates are typically of order
$1/n$; choosing the constant $\gamma$ sufficiently large makes the expected
surrogate loss on a truthful round arbitrarily small. In contrast, an
untruthful counterexample may incur loss $O(n)$ on the target cell.

The resulting comparison of masses mirrors the elementary weighted-majority
analysis. After $T$ rounds, the total mass is at most
\(\rho^T\)
for some universal constant $\rho<1$, while the mass inside the target cell
is at least, up to constants in the exponent,
\[
    \frac{1}{n!}
    \exp\left(-O(T/\gamma)-O(nk)\right).
\]
Choosing $\gamma$ sufficiently large and comparing these two bounds gives
\[
    T=O(\log(n!)+nk)=O(n\log n+nk).
\]
One minor subtlety in the formal argument is that the density inside the
target cell is no longer uniform after the first update. We therefore
analyze its total mass relative to the uniform measure on the target cell and
use Jensen's inequality; this is where the detailed proof differs slightly
from the informal calculation above.

\paragraph{Geometric ranking classes: making strong majority proper.}
For $d$-dimensional geometric ranking classes, we can stay much closer to the
original weighted-majority algorithm. We keep an ordinary weight on every
candidate ranking and, after a counterexample, multiply by $1/2$ the weights
of the candidates that agree with the learner on the returned pair. The only
question is again how to turn the resulting pairwise weighted preferences
into a linear order.

Here geometric structure gives a different solution. Instead of keeping every
pair supported by a simple majority, we keep only sufficiently strong
majorities: roughly speaking, we orient $i$ before $j$ only when a
$1-O(1/d)$ fraction of the current weight ranks $i$ before $j$. The key
geometric fact is that this strong-majority relation is acyclic, and hence
can be extended to a linear order.

To see the reason, represent the items by vectors
$v_1,\ldots,v_n\in\mathbb R^d$. A comparison $i\prec j$ corresponds to a
linear inequality on the direction $w$,
\[
    \langle v_j-v_i,w\rangle>0.
\]
If the strong-majority relation contained a directed cycle, the corresponding
difference vectors would have a positive dependence whose sum is zero.
By Carath\'eodory's theorem, such a dependence can already be witnessed by
only $O(d)$ of these vectors. No direction $w$ can satisfy all the
corresponding inequalities simultaneously. Therefore every candidate ranking
must disagree with at least one of these $O(d)$ comparisons. But if each of
them were supported by all but an $O(1/d)$ fraction of the total weight, the
union bound would give a contradiction. This is exactly what makes the
strong-majority relation acyclic.

We query any linear extension of this relation. For every pair oriented by
the resulting ranking, at least an $\Omega(1/d)$ fraction of the total weight
supports that orientation; otherwise the reverse orientation would itself be
a strong-majority edge. Consequently, whenever a counterexample is returned,
an $\Omega(1/d)$ fraction of the total weight is multiplied by $1/2$.
The total weight therefore decreases by a factor
\[
    1-\Omega(1/d)
\]
on every round, while the target weight is decreased only on the at most
$k$ untruthful rounds.

Finally, an arrangement of the $\binom n2$ comparison hyperplanes in
$\mathbb R^d$ induces at most $n^{O(d)}$ distinct rankings, and hence the
initial logarithmic weight is $O(d\log n)$. The usual weighted-majority
calculation now gives
\[
    O\bigl(d\cdot(d\log n+k)\bigr)
    =
    O(d^2\log n+dk)
\]
queries.

There are also alternative geometric proofs of this upper bound, based on
Helly's theorem or on the centerpoint theorem~\cite{Matousek2002}. These have
the additional feature that the queried linear order can be chosen to be
induced by a direction in the given representation; see
Section~\ref{sec:geometric-upper}.

\subsection{Lower Bounds}

The lower bounds are simpler and more direct than the upper
bounds. We begin with two combinatorial arguments for unrestricted rankings,
giving $\Omega(n\log n)$ and $\Omega(nk)$. We then extend these ideas to
geometric rankings and use an additional polynomial construction to obtain
the full $\Omega(d\log n+dk)$ lower bound.

For simplicity, throughout this overview we describe the lower bounds using
an adversary that does not commit to the target ranking $\pi^\star$ in
advance. Instead, the adversary maintains a set of possible targets and
chooses its counterexamples so that this set remains nonempty, committing to
a target only at the end of the interaction. This is slightly stronger than
our formal model, where $\pi^\star$ is fixed from the outset, but makes the
arguments more transparent. In the formal proofs, we recover the same lower
bounds with a target chosen in advance, using a randomized construction.

\paragraph{Arbitrary rankings.}
The $\Omega(n\log n)$ bound follows directly from sorting. Fix a balanced
comparison tree for sorting $n$ items, in which every root-to-leaf path has
length $\Omega(n\log n)$; for example, MergeSort induces such a tree. The
adversary follows a path in this tree. At the current node, suppose the tree
compares $i$ and $j$. The learner's proposed ranking fixes an orientation of
this pair. The adversary returns $\{i,j\}$ as a counterexample and follows
the child corresponding to the opposite orientation. At the end of the
interaction, the ranking at the resulting leaf is consistent with all the
counterexamples that were returned. Thus the interaction is forced to
traverse an entire root-to-leaf path, requiring~$\Omega(n\log n)$ queries.

The $\Omega(nk)$ bound comes from essentially the same Condorcet cycle that
obstructs pairwise majority. Consider the directed cycle
\[
    1\to2\to\cdots\to n\to1,
\]
and, for each $r\in[n]$, let $\sigma_r$ be the linear order obtained by
breaking the cycle at its $r$-th edge. Every linear order violates at least
one edge of the cycle. Whenever the learner proposes an order, the adversary
returns such a violated edge as a counterexample. If edge $r$ is returned,
this counterexample is truthful for every candidate $\sigma_s$ except
$\sigma_r$. Consequently, $\sigma_r$ remains a possible target as long as
edge $r$ has been returned at most $k$ times. To rule out all but one of the
$n$ candidates, the adversary must therefore return at least $k+1$ copies of
at least $n-1$ different edges. This gives
\[
    \Omega(nk)
\]
queries. Together, the two arguments yield a lower bound of
\[
    \Omega(n\log n+nk)
\]
on the number of queries.

\paragraph{Geometric rankings.}
Both lower bounds above already apply to geometric rankings when the number
of items equals the dimension. Indeed, represent $d$ items by the standard
basis vectors $e_1,\ldots,e_d\in\mathbb R^d$. Since
$\langle e_i,w\rangle=w_i$, choosing the coordinates of $w$ in any prescribed
order realizes any desired linear order on the $d$ items. The unrestricted
lower bounds therefore immediately give
\[
    \Omega(d\log d+dk).
\]

It remains to obtain the dependence on $n$ in the noiseless term. For this
we use polynomial rankings. Identify the items with the points
$1,\ldots,n$ on the real line, and let a polynomial $p$ of degree at most
$d$ rank them according to
\[
    p(1),p(2),\ldots,p(n).
\]
These rankings are $d$-dimensional geometric: the constant term does not
affect the ranking, and
\[
    p(i)
    =
    a_0+\left\langle
        (a_1,\ldots,a_d),(i,i^2,\ldots,i^d)
    \right\rangle.
\]
Thus, such polynomial orders are $d$-dimensional geometric with the items represeneted by points on the moment curve $\{(x,x^2,\ldots,x^d) : x\in\mathbb{R}\}$.

We restrict attention to polynomials with $\Theta(d)$ turning points, with
one turning point in each of $\Theta(d)$ disjoint intervals, each containing
$\Theta(n/d)$ consecutive items; see
Figure~\ref{fig:polynomial-lower-bound}. The locations of these turning
points can be chosen essentially independently by prescribing the roots of
$p'$.

\begin{figure}[t]
    \centering
    \begin{tikzpicture}[x=0.85cm,y=0.75cm,>=Latex]

        \draw[->] (0,0) -- (13.2,0) node[right] {$i$};

        \foreach \x in {0.5,3.5,6.5,9.5,12.5} {
            \draw[dashed,gray!45] (\x,-0.35) -- (\x,3.8);
        }

        \draw[very thick,smooth]
            plot coordinates {
                (0.6,1.0)
                (1.2,1.8)
                (1.8,2.7)
                (2.3,3.05)
                (2.9,2.65)
                (3.5,1.75)
                (4.1,1.0)
                (4.7,0.65)
                (5.2,0.85)
                (5.8,1.55)
                (6.5,2.5)
                (7.1,3.0)
                (7.7,2.7)
                (8.3,1.8)
                (8.9,1.0)
                (9.5,0.7)
                (10.1,0.9)
                (10.7,1.7)
                (11.2,2.65)
                (11.8,3.05)
                (12.4,2.55)
            };

        \fill[blue!70!black] (2.3,3.05) circle (2.3pt);
        \fill[blue!70!black] (4.7,0.65) circle (2.3pt);
        \fill[blue!70!black] (7.1,3.0) circle (2.3pt);
        \fill[blue!70!black] (9.5,0.7) circle (2.3pt);
        \fill[blue!70!black] (11.8,3.05) circle (2.3pt);

        \node[above] at (2.3,3.05) {$r_1$};
        \node[below] at (4.7,0.65) {$r_2$};
        \node[above] at (7.1,3.0) {$r_3$};
        \node[below] at (9.5,0.7) {$r_4$};
        \node[above] at (11.8,3.05) {$r_5$};

        \draw[red!70!black,line width=1pt]
            (6.65,-0.22) -- (7.55,-0.22);

        \draw[red!70!black]
            (6.65,-0.32) -- (6.65,-0.12);
        \draw[red!70!black]
            (7.55,-0.32) -- (7.55,-0.12);

        \node[red!70!black,below=5pt] at (7.1,-0.22)
            {current interval for $r_3$};

        \draw[red!70!black,dotted]
            (7.1,0) -- (7.1,2.9);

    \end{tikzpicture}

    \caption{
    Schematic view of the polynomial construction.
    The items are consecutive points on the real line, and the polynomial
    has one turning point in each of $\Theta(d)$ disjoint intervals.
    A comparison of two consecutive items near the midpoint of the current
    uncertainty interval determines on which side the corresponding turning
    point may lie, yielding a binary-search-type lower bound.
    }
    \label{fig:polynomial-lower-bound}
\end{figure}

Consider one of these intervals. As long as the corresponding turning point
is only known to lie in some subinterval, consider two consecutive items near
its midpoint. Their relative order determines on which side of this location
the turning point lies. The adversary returns this pair with the orientation
opposite to the learner's proposed ranking, thereby retaining a realizable
choice of the turning point in one of the two halves. Thus each
counterexample reveals at most one step of a binary search, and locating a
single turning point requires
\[
    \Omega(\log(n/d))
\]
queries.

The adversary can repeat this argument separately for each of the
$\Theta(d)$ turning points. Hence
\[
    \Omega\bigl(d\log(n/d)\bigr)
\]
queries are necessary.

Combining this with the previous $\Omega(d\log d)$ bound gives
\[
    \Omega(d\log n),
\]
since
\[
    \max\!\left\{
        d\log d,\,
        d\log(n/d)
    \right\}
    \geq
    \frac12 d\log n.
\]
Together with the $\Omega(dk)$ bound, this yields
\[
    \Omega(d\log n+dk).
\]

\section{Related Work}\label{sec:related-work}
\paragraph{Learning and ranking from comparisons.}
Pairwise comparisons have a long history as a means of eliciting preferences
and judgments, with classical statistical models including those of Thurstone
and Bradley--Terry \cite{Thurstone1927,BradleyTerry1952}. They have also been
studied extensively as a form of supervision in theoretical learning.
Balcan, Vitercik, and White \cite{BalcanVitercikWhite2016} study learning
real-valued combinatorial functions from pairwise comparisons, while
Kane et al.~\cite{KaneEtAl2017} study comparison queries in active
classification. Related and subsequent work studies, among other questions,
robustness to noisy comparisons and the power of comparisons for learning
linear classifiers
\cite{XuEtAl2017,HopkinsKaneLovett2020,HopkinsEtAl2020}.

A closely related literature studies learning rankings from pairwise
comparisons, under both adaptive and non-adaptive sampling schemes
\cite{JamiesonNowak2011,Ailon2011,AilonBegleiterEzra2012,
RajkumarAgarwal2016,AgarwalEtAl2017}. Of particular relevance to our geometric setting, Jamieson and
Nowak~\cite{JamiesonNowak2011} consider objects embedded in a
low-dimensional Euclidean space, with rankings determined by distance from
a common reference point, and show that this structure can substantially
reduce the number of required comparisons.
Related forms of relative supervision also arise in contrastive learning
\cite{AlonEtAl2024}.

Our model differs in the direction of the interaction. Rather than choosing
a pair and receiving its relative order, the learner proposes a complete
ranking and the oracle returns a pair witnessing an error in that ranking.
In particular, every learner query must itself be a linear order, making
properness a central issue in our setting.

Another line of work studies ranking and preference learning from pairwise
comparison data. Balcan et al.~\cite{BalcanEtAl2007} give a robust reduction from AUC
ranking to binary classification, showing how guarantees for the induced
pairwise classification problem translate into guarantees for ranking. More recently, Pukdee, Balcan, and
Ravikumar~\cite{PukdeeBalcanRavikumar2026} study preference learning from
pairwise comparison data through the Bradley--Terry model, including the
question of what is recovered when the model is misspecified. These works consider substantially different learning settings from ours,
but share the perspective of pairwise preferences as a basic form of
supervision.

\paragraph{Learning from equivalence queries.}
Our interaction is closely related to the classical model of learning from
equivalence queries, developed by Angluin
\cite{Angluin1987,Angluin1988}. In that model, the learner proposes a
hypothesis and, if it is incorrect, receives a counterexample witnessing a
disagreement with the target.
A substantial literature studies this model and variants of it; see, for
example, \cite{Vaandrager2017} for a broader perspective on active model
learning.

More recently, Braverman et al.~\cite{BravermanEtAl2026} revisit
equivalence-query learning from the perspective of proper learning and
counterexample generation. They study less adversarial, symmetric
counterexample generators, including random counterexamples previously
considered in
\cite{AngluinDohrn2017,Bhatia2021,ChaseFreitagReyzin2024},
and also consider a bandit-feedback variant in which the counterexample is
returned without its correct label.

Our model can be viewed as a structured equivalence-query problem in which
the hypotheses are rankings and counterexamples are pairs of misordered
items. There are, however, two important differences. First, we allow a
bounded number of untruthful counterexamples. Second, the structural
requirement that every learner query be a linear order is central to our
results; in the geometric setting, the queried order need not itself belong
to the target geometric class.

\paragraph{Proper learning and projection.}
Our geometric upper bound is also related to techniques developed for
proper learning. Kane et al.~\cite{KaneLivniMoranYehudayoff2019} use a
majority-vote construction together with a mechanism for replacing it by a
hypothesis from the prescribed class. Bousquet et al.~\cite{BousquetEtAl2020}
formalize this idea through the \emph{projection number}, which quantifies
when the predictions on which a majority has sufficiently large margin can
be simultaneously realized by a hypothesis from the class. Related questions about the price of restricting
online learners to proper or otherwise simple predictors are studied by
Hanneke, Livni, and Moran~\cite{HannekeLivniMoran2021}.

The same principle appears in our geometric upper bound. We first form
pairwise predictions supported by an overwhelming weighted majority and
then show, using the geometry of the ranking class, that these strong
predictions can be extended to a valid linear order. The settings and analyses are different---in particular, the projection
arguments above are used in realizable learning, whereas we apply the same
general principle in the presence of boundedly many untruthful
counterexamples---but the underlying idea is closely related.

\paragraph{Sorting under partial information and linear extensions.}
The use of convex geometry in the study of linear extensions goes back to
Stanley \cite{Stanley1981}, who applied the Aleksandrov--Fenchel
inequalities to obtain log-concavity results for linear extensions of
posets. Kahn and Saks \cite{KahnSaks1984} subsequently proved that every
non-total partial order contains a pair whose two possible orientations
each occur in a constant fraction of its linear extensions. More precisely,
they obtained the constant $3/11$. As a consequence, the
information-theoretic lower bound for sorting under partial information is
tight up to a constant factor. Simpler geometric proofs were
later found by Kahn and Linial \cite{KahnLinial1991} and by Karzanov and
Khachiyan \cite{KarzanovKhachiyan1991}. Both are based on the Brunn--Minkowski
theorem and give the weaker constants $1/(2e)$ in \cite{KahnLinial1991} and $1/e^2$ in \cite{KarzanovKhachiyan1991}.

This line of work is particularly relevant to our upper bound. The
geometric representation of linear extensions by regions in the cube
\cite{Stanley1986}, and the use of convex-geometric inequalities to find
balanced pairs, are closely related to the argument we use in the truthful
setting. Our treatment with untruthful counterexamples requires going
beyond the uniform measure on the set of linear extensions: the algorithm
maintains a non-uniform log-concave density and uses the log-concave form
of Gr\"unbaum's theorem.

\paragraph{Sorting with noisy comparisons.}
There is also an extensive literature on sorting and related comparison
problems when individual comparison outcomes may be erroneous
\cite{RivestEtAl1980,FeigeEtAl1994,BravermanMossel2008}.
The feedback model studied here is different: the learner does not choose a
pair to compare, but instead proposes an entire ranking and the oracle
chooses which pair to return as a counterexample. In particular, the main
difficulty in our setting is not only robustness to noise, but also the
requirement that the learner's pairwise predictions form a coherent linear
order.

\section{Arbitrary Rankings}
\label{sec:arbitrary-rankings}

In this section we prove the upper and lower bounds for the unrestricted
class of all rankings. We begin with the upper bound.

\subsection{The Upper Bound}
\label{sec:arbitrary-upper}

We first recall the geometric representation used in the proof overview.
For a ranking $\pi$ of $[n]$, let
\[
    \mathcal C_\pi
    :=
    \left\{
        x\in(0,1)^n:
        x_{\pi(1)}<x_{\pi(2)}<\cdots<x_{\pi(n)}
    \right\}.
\]
We refer to $\mathcal C_\pi$ as the \emph{cell} corresponding to $\pi$.
Up to their measure-zero boundaries, the $n!$ cells partition the cube
$[0,1]^n$, and by symmetry they all have volume $1/n!$.

This representation is closely related to the convex-geometric approach to
linear extensions initiated by Stanley \cite{Stanley1981}. Kahn and Saks
\cite{KahnSaks1984} used this approach to prove that every non-total partial
order contains a pair whose two possible orientations each occur in a
constant fraction of its linear extensions. Kahn and Linial
\cite{KahnLinial1991} later gave a Brunn--Minkowski proof of the same
conclusion, with a slightly weaker constant; a similar independent proof was given by Karzanov and Khachiyan \cite{KarzanovKhachiyan1991}.

We will use the following form of Gr\"unbaum's theorem. Recall that a
probability density $p$ on $\mathbb R^n$ is \emph{log-concave} if
\[
    p(\lambda x+(1-\lambda)y)
    \geq
    p(x)^\lambda p(y)^{1-\lambda}
\]
for every $x,y\in\mathbb R^n$ and $\lambda\in[0,1]$. Equivalently,
$\log p$ is concave on its support. In particular, the uniform distribution
on any convex body is log-concave.

\begin{theorem}[Gr\"unbaum's inequality]
\label{thm:grunbaum}
Let $p$ be a log-concave probability density on $\mathbb R^n$, and let
$c$ denote its barycenter. Then every halfspace $H$ containing $c$ satisfies
\[
    \Pr_{X\sim p}[X\in H]\geq \frac1e.
\]
\end{theorem}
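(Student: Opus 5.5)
The plan is the standard reduction to one dimension. Take any halfspace $H=\{x:\langle u,x\rangle\ge t\}$ with $c\in H$, where $u$ is a unit vector; then $t\le\langle u,c\rangle$. Push $p$ forward under $x\mapsto\langle u,x\rangle-\langle u,c\rangle$. By the Prékopa--Leindler theorem, marginals of log-concave densities are log-concave, so the result is a log-concave density $q$ on $\mathbb R$ with mean $0$. Since
\[
\Pr_{X\sim p}[X\in H]\ \ge\ \Pr_{Y\sim q}[Y\ge 0],
\]
it suffices to prove the following one-dimensional claim: every log-concave density $q$ on $\mathbb R$ with mean zero satisfies $\Pr[Y\ge0]\ge 1/e$. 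Applying the claim to $q(-y)$ also gives $\Pr[Y\le 0]\ge 1/e$.

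For the one-dimensional claim, let $G(s)=\Pr[Y\ge s]$ and write $q=e^{-\phi}$ with $\phi$ convex. I will use two facts.

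\begin{itemize}
\item \emph{$G$ is log-concave.} Tails of log-concave densities are log-concave; this is again Prékopa--Leindler, applied to $\mathbf 1_{y\ge s}\,q(y)$, which is jointly log-concave in $(s,y)$.
\item \emph{The mean is an integral of the tail.} The mean-zero condition gives $\int_0^\infty G(s)\,ds=\int_{-\infty}^0(1-G(s))\,ds$.
\end{itemize}

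Suppose for contradiction that $G(0)=a<1/e$. The goal is to show that the right tail then carries too little mass to balance the left part. I would follow the classical extremal approach: the extremal density is the one-sided exponential $q(y)=e^{-(y+1)}$ on $[-1,\infty)$, for which $\Pr[Y\ge 0]=e^{-1}$ exactly.

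\begin{enumerate}
\item Choose an exponential comparison density $r(y)=\lambda e^{-\lambda(y-b)}$ on $[b,\infty)$, with $b$ and $\lambda$ fixed so that $r$ has the same tail at $0$ as $q$. Concretely, match $G(0)$ and the ratio $q(0)/G(0)$.
\item Use log-concavity to show that $q-r$ changes sign in a controlled way. The ratio $q/r$ is log-concave on $[b,\infty)$, so $q-r$ has at most two sign changes, with the pattern $(+,-,+)$ excluded. From this I would deduce that the mean of $q$ is at most the mean of $r$.
\item Compute directly that an exponential distribution started at $b$ with $\Pr[Y\ge0]=a<1/e$ has mean strictly less than $0$. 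Together with the previous step this contradicts mean zero.
\end{enumerate}

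The main obstacle is the sign-change and mean-comparison step, which needs careful handling of where the support of $q$ begins relative to $b$. A cleaner alternative is the standard approach via the symmetric form of Brunn--Minkowski for the cone construction. Since the paper cites \cite{LovaszVempala2007} for exactly this statement, in the final write-up I would likely present the marginal reduction and cite the one-dimensional lemma from that reference, sketching the exponential-extremizer argument only for completeness.
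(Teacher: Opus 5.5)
The paper does not prove this statement. It credits the uniform case to Gr\"unbaum and takes the log-concave version from \cite[Lemma~5.12]{LovaszVempala2007}. Your plan is the standard proof of that result and is correct in outline: project onto $x\mapsto\langle u,x-c\rangle$, keep log-concavity by Pr\'ekopa--Leindler, and prove the one-dimensional mean-zero claim with the one-sided exponential as extremizer. So presenting the reduction and citing the one-dimensional lemma is consistent with the paper, and a self-contained sketch would go beyond what the paper gives.

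The one step that does not close as written is the mean comparison in your item~2.
\begin{itemize}
\item Your sign-change count is right on $[b,\infty)$, but $r$ vanishes on $(-\infty,b)$ while $q$ need not. For example, for a Gaussian $q$ one has $q<r$ at $b$, $q<r$ just left of $0$, $q>r$ just right of $0$, and $q<r$ far out.
\item So globally $q-r$ typically has the pattern $(+,-,+,-)$. Three sign changes do not by themselves order the means.
\end{itemize}

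You do not need densities here, because your first bullet already finishes the argument.
\begin{itemize}
\item Since the mean lies in the interior of the support, $q(0)>0$ and $q$ is continuous at $0$.
\item So $\log G$ is concave with $\log G(0)=\log a$ and $(\log G)'(0)=-q(0)/G(0)=-\lambda$.
\item Your comparison tail is $\log R(s)=\min\{0,\log a-\lambda s\}$, which is exactly the minimum of $0$ and the tangent line of $\log G$ at $0$.
\item Concavity together with $G\le 1$ gives $G\le R$ pointwise, i.e.\ $r$ stochastically dominates $q$.
\item Your tail formula for the mean then yields $0=\mathbb E_q[Y]\le \mathbb E_r[Y]=b+1/\lambda=(1+\log a)/\lambda$.
\end{itemize}
Hence $a\ge 1/e$ directly, with no contradiction hypothesis and no sign-change bookkeeping.
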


For the uniform distribution on a convex body, this is the classical
inequality of Gr\"unbaum \cite{Grunbaum1960}. The formulation above for
general log-concave densities follows, for example, from
\cite[Lemma~5.12]{LovaszVempala2007}.

When all counterexamples are truthful, one may simply maintain the convex
body of points consistent with the feedback and query the ranking induced by
its centroid. Gr\"unbaum's theorem then gives constant-factor progress on
every round; this is the argument discussed in Section~\ref{sec:proof-overview}
and is closely related to the classical Kahn--Saks approach
\cite{KahnSaks1984}. We now turn directly to the additional ingredient needed
in the presence of untruthful counterexamples.

\paragraph{The weighted update.}

Fix a sufficiently large universal constant $\gamma>0$. We maintain a
nonnegative density $w_t$ on $[0,1]^n$. Initially,
\[
    w_0(x)=1.
\]
Let
\[
    W_t:=\int_{[0,1]^n}w_t(x)\,dx
    \qquad\text{and}\qquad
    p_t(x):=\frac{w_t(x)}{W_t}.
\]
On round $t+1$, let
\[
    c_t:=\mathbb E_{X\sim p_t}[X]
\]
be the barycenter of $p_t$, and query the ranking obtained by sorting the
coordinates of $c_t$, breaking ties arbitrarily.

Suppose that the returned counterexample is $\{i,j\}$ and that the queried
ranking places $i$ before $j$. Thus
\[
    (c_t)_i\leq(c_t)_j.
\]
Define
\[
    z_t(x):=x_j-x_i
\]
and the surrogate loss
\[
    \ell_t(x):=
    \bigl(1+\gamma n z_t(x)\bigr)_+,
    \qquad
    (a)_+:=\max\{a,0\}.
\]
We update
\[
    w_{t+1}(x)
    :=
    w_t(x)e^{-\ell_t(x)}.
    \tag{1}\label{eq:weight-update}
\]
Importantly, the learner does not need to know the lie budget $k$ in order
to perform this update.

\paragraph{Log-concavity and decrease of the total mass.}

We first show that the update retains the geometric property needed to apply
Gr\"unbaum's theorem.

\begin{lemma}
\label{lem:logconcave}
For every $t$, the density $w_t$ is log-concave on $[0,1]^n$.
\end{lemma}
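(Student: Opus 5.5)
We argue by induction on $t$, using the explicit closed form of $w_t$. Unrolling the update \eqref{eq:weight-update} gives
\[
    w_t(x)=\exp\Bigl(-\sum_{s<t}\ell_s(x)\Bigr),\qquad x\in[0,1]^n.
\]
So it suffices to show that $-\sum_{s<t}\ell_s$ is concave on $[0,1]^n$, that is, that each $\ell_s$ is convex.

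\emph{Convexity of each $\ell_s$.} The map $z_s(x)=x_j-x_i$ is linear. Hence $x\mapsto 1+\gamma n z_s(x)$ is affine. The function $a\mapsto (a)_+=\max\{a,0\}$ is convex and nondecreasing, as a maximum of two affine functions. Therefore $\ell_s=\max\{0,\,1+\gamma n z_s\}$ is a pointwise maximum of two affine functions of $x$, and so it is convex on $\mathbb R^n$. A finite sum of convex functions is convex, so $\log w_t=-\sum_{s<t}\ell_s$ is concave on the convex domain $[0,1]^n$. Note that $w_t$ is everywhere positive there, so its support is the whole cube.

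\emph{From concave logarithm to the defining inequality.} For $x,y\in[0,1]^n$ and $\lambda\in[0,1]$, concavity of $\log w_t$ gives
\[
    \log w_t(\lambda x+(1-\lambda)y)\ \ge\ \lambda\log w_t(x)+(1-\lambda)\log w_t(y).
\]
Exponentiating yields the required inequality. The normalized density $p_t=w_t/W_t$ is also log-concave. It can be extended by $0$ outside the cube, which is a convex set, to obtain a log-concave density on $\mathbb R^n$ (the product with the indicator of a convex set preserves the inequality). This is the form in which Theorem~\ref{thm:grunbaum} will be applied. Here $W_t>0$ since $w_t>0$ on the cube, so the normalization is well defined.

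There is no real obstacle. The only point needing care is to phrase log-concavity on the cube, rather than on $\mathbb R^n$, consistently with the definition, and to note that extending by zero preserves it. The essential design feature is simply that the surrogate loss is convex, unlike a cell-indicator loss.
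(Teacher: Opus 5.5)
Your proof is correct and follows the same argument as the paper. In both, $\ell_s$ is convex because it is the positive part of an affine function, so multiplying the initially uniform density by $e^{-\ell_s}$ preserves log-concavity; unrolling the product rather than inducting step by step is only a cosmetic difference, and your remark about extending $p_t$ by zero outside the cube harmlessly clarifies how Theorem~\ref{thm:grunbaum} is applied.
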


\begin{proof}
The initial density is uniform on the convex set $[0,1]^n$, and is therefore
log-concave. Moreover, $z_t$ is affine, so
\[
    x\longmapsto (1+\gamma n z_t(x))_+
\]
is convex. Hence $-\ell_t$ is concave, and multiplying a log-concave
density by $e^{-\ell_t}$ preserves log-concavity.
\end{proof}

The next lemma is the analogue of the usual weighted-majority progress
bound.

\begin{lemma}
\label{lem:total-mass}
There is a universal constant $\rho<1$ such that
\[
    W_{t+1}\leq \rho W_t
\]
on every round.
\end{lemma}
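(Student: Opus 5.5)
The plan is to compare the new total mass with the old one through the normalized density $p_t$, splitting the cube along the hyperplane $z_t=0$. Write
\[
    \frac{W_{t+1}}{W_t}
    =
    \mathbb E_{X\sim p_t}\bigl[e^{-\ell_t(X)}\bigr].
\]
Since $\ell_t\geq 0$ everywhere, we have $e^{-\ell_t}\leq 1$ on the whole cube. On the halfspace $H:=\{x: z_t(x)\geq 0\}$ we have $1+\gamma n z_t(x)\geq 1$, so $\ell_t\geq 1$ there and $e^{-\ell_t}\leq e^{-1}$. This gives
\[
    \frac{W_{t+1}}{W_t}
    \leq
    1-\bigl(1-e^{-1}\bigr)\Pr_{X\sim p_t}[X\in H].
\]

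It then remains to bound $\Pr_{p_t}[H]$ from below by a universal constant. This is where the earlier results come in.

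\emph{Containment of the barycenter.} The queried ranking places $i$ before $j$, so $(c_t)_i\leq (c_t)_j$, which is exactly $z_t(c_t)\geq 0$. Hence $c_t\in H$, and $H$ is a closed halfspace containing the barycenter.

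\emph{Log-concavity.} By Lemma~\ref{lem:logconcave}, $w_t$ is log-concave on $[0,1]^n$. Extending it by zero outside the cube keeps it log-concave on $\mathbb R^n$, because the cube is convex: the zero extension of a log-concave function on a convex set is log-concave. Normalizing by $W_t$ then makes $p_t$ a log-concave probability density on $\mathbb R^n$ with barycenter $c_t$.

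Theorem~\ref{thm:grunbaum} now yields $\Pr_{p_t}[H]\geq 1/e$. Therefore
\[
    W_{t+1}\leq \rho W_t,
    \qquad
    \rho:=1-\frac{1-e^{-1}}{e}<1,
\]
and $\rho$ depends neither on $\gamma$ nor on $n$.

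There is no real obstacle; the only points needing care are minor technicalities.
\begin{itemize}
    \item $W_t$ must be positive and finite, so that $p_t$ is a genuine probability density. This holds because $w_t$ is a product of finitely many strictly positive continuous factors on the cube.
    \item Grünbaum's inequality must apply to a halfspace whose boundary passes exactly through $c_t$, which is the closed case. The tie-breaking in the query does not matter, since only $z_t(c_t)\geq 0$ is used.
\end{itemize}
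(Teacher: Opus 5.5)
Your proposal is correct and follows essentially the same route as the paper: you split along the halfspace $\{z_t\geq 0\}$, which contains $c_t$, use $\ell_t\geq 1$ on it, and apply the log-concave Gr\"unbaum inequality via Lemma~\ref{lem:logconcave}, obtaining the same constant $\rho=1-(1-e^{-1})/e$. Your extra checks (extending $w_t$ by zero outside the cube, and $0<W_t<\infty$) are valid details that the paper leaves implicit.
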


\begin{proof}
Let
\[
    H_t:=\{x:z_t(x)\geq0\}
        =\{x:x_i\leq x_j\}.
\]
By construction, $c_t\in H_t$. Since $p_t$ is log-concave,
Theorem~\ref{thm:grunbaum} gives
\[
    p_t(H_t)\geq\frac1e.
\]
For every $x\in H_t$, we have $\ell_t(x)\geq1$, and hence
\[
    e^{-\ell_t(x)}\leq e^{-1}.
\]
Consequently,
\begin{align*}
    \frac{W_{t+1}}{W_t}
    &=
    \mathbb E_{X\sim p_t}
        \left[e^{-\ell_t(X)}\right] \\
    &\leq
    p_t(H_t)e^{-1}+
    \bigl(1-p_t(H_t)\bigr)\\
    &\leq
    1-\frac{1-e^{-1}}{e}.
\end{align*}
Thus the claim holds with
\[
    \rho:=1-\frac{1-e^{-1}}{e}<1.
\]
\end{proof}

It follows that after $T$ counterexamples,
\[
    W_T\leq\rho^T.
    \tag{2}\label{eq:total-mass-upper}
\]

\paragraph{The mass of the target cell.}

Let $\pi^\star$ be the target ranking and let
\[
    \mathcal C^\star:=\mathcal C_{\pi^\star}
\]
be its cell. Define
\[
    M_T:=\int_{\mathcal C^\star}w_T(x)\,dx.
\]
The key point is that, although the density becomes non-uniform inside
$\mathcal C^\star$, its total mass cannot decrease too quickly.

Let $Q$ denote the uniform distribution on $\mathcal C^\star$. Since
$\vol(\mathcal C^\star)=1/n!$, iterating
\eqref{eq:weight-update} gives
\begin{align}
    M_T
    &=
    \frac1{n!}
    \mathbb E_{X\sim Q}
    \left[
        \exp\left(
            -\sum_{t=0}^{T-1}\ell_t(X)
        \right)
    \right] \notag\\
    &\geq
    \frac1{n!}
    \exp\left(
        -\sum_{t=0}^{T-1}
        \mathbb E_{X\sim Q}[\ell_t(X)]
    \right),
    \tag{3}\label{eq:jensen-target}
\end{align}
where the inequality follows from Jensen's inequality.

We next bound the expected loss in a truthful and in an untruthful round.

\begin{lemma}
\label{lem:target-loss}
Let $\gamma>0$ be the universal constant fixed in the weighted update above.
For $X$ uniform on $\mathcal C^\star$:
\begin{enumerate}[label=(\roman*)]
    \item on a truthful round,
    \[
        \mathbb E[\ell_t(X)]\leq\frac1\gamma;
    \]
    \item on an untruthful round,
    \[
        \mathbb E[\ell_t(X)]\leq 1+\gamma n.
    \]
\end{enumerate}
\end{lemma}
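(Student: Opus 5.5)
Part (ii) is immediate: since $z_t(x)=x_j-x_i\in[-1,1]$ on the cube, we have $\ell_t(x)\le 1+\gamma n$ pointwise, and hence also in expectation. All the work is in part (i).

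For (i), consider a truthful round. The queried ranking places $i$ before $j$, and the counterexample is truthful, so $\pi^\star$ places $j$ before $i$. Hence $x_j<x_i$ on all of $\mathcal C^\star$, so $z_t(X)=-(X_i-X_j)<0$ there. Moreover $\ell_t(X)>0$ only when $X_i-X_j<1/(\gamma n)$, and on that event $\ell_t(X)\le 1$. Therefore
\[
\mathbb E[\ell_t(X)]\le \Pr\bigl[X_i-X_j<1/(\gamma n)\bigr].
\]
A sharper route is to compute $\mathbb E[(1-\gamma n D)_+]$ directly, where $D:=X_i-X_j>0$.

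The key step is the distribution of the gap $D$. A uniform point of $\mathcal C^\star$ is distributed as the order statistics $U_{(1)}<\cdots<U_{(n)}$ of $n$ i.i.d.\ uniforms on $[0,1]$, assigned to the items in the order $\pi^\star$. If $j$ has position $a$ and $i$ has position $b>a$, then $D=U_{(b)}-U_{(a)}$. This difference stochastically dominates a single spacing $U_{(a+1)}-U_{(a)}$, and by the standard spacing fact each spacing is $\mathrm{Beta}(1,n)$, with
\[
\Pr[\text{spacing}<s]=1-(1-s)^n\le ns.
\]
So it suffices to treat adjacent positions.

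Then
\[
\mathbb E[(1-\gamma n D)_+]=\int_0^{1/(\gamma n)}\Pr[D<s]\,\gamma n\,ds\le \int_0^{1/(\gamma n)} ns\cdot\gamma n\,ds=\frac{\gamma n^2}{2(\gamma n)^2}=\frac{1}{2\gamma}\le\frac1\gamma,
\]
as required. Even the cruder bound $\Pr[D<1/(\gamma n)]\le n\cdot\frac{1}{\gamma n}=\frac1\gamma$ suffices.

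The only nontrivial point is identifying the spacing distribution. One clean justification: the vector of spacings $(U_{(1)},U_{(2)}-U_{(1)},\dots,1-U_{(n)})$ is uniform on the simplex, hence exchangeable. Each spacing is therefore distributed as $U_{(1)}$, the minimum of $n$ uniforms, which satisfies $\Pr[U_{(1)}<s]=1-(1-s)^n\le ns$.
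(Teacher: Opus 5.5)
Your proof is correct and follows essentially the same route as the paper. Like the paper, you bound the loss by $\Pr[D\le 1/(\gamma n)]$, dominate $D$ by a single uniform spacing $G$ with $\Pr(G\le s)=1-(1-s)^n\le ns$, and handle (ii) by a pointwise bound. Your layer-cake computation giving $1/(2\gamma)$ is only a factor-$2$ sharpening, and your exchangeability argument just supplies the standard spacing fact that the paper cites.
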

\begin{proof}
Suppose, as above, that the query places $i$ before $j$.

Consider first a truthful round. The target then places $j$ before $i$.
Thus, throughout $\mathcal C^\star$,
\[
    X_j<X_i.
\]
Write
\[
    D:=X_i-X_j>0.
\]
Then
\[
    \ell_t(X)=(1-\gamma nD)_+.
    \tag{4}\label{eq:truthful-loss}
\]

A uniformly random point in any ranking cell has the same distribution,
after permuting its coordinates, as the order statistics of $n$ independent
uniform random variables on $[0,1]$. More explicitly, if
$U_1,\ldots,U_n$ are independent and uniform on $[0,1]$, and
\[
    U_{(1)}<U_{(2)}<\cdots<U_{(n)}
\]
are these random variables arranged in increasing order, then
$(U_{(1)},\ldots,U_{(n)})$ is uniformly distributed over the cell
\[
    \{0<x_1<\cdots<x_n<1\}.
\]

We use a standard fact about the spacings between uniform order statistics
\cite{Pyke1965}. The $n+1$ gaps
\[
    U_{(1)},\;
    U_{(2)}-U_{(1)},\ldots,
    U_{(n)}-U_{(n-1)},\;
    1-U_{(n)}
\]
are jointly uniform over the simplex of nonnegative vectors whose
coordinates sum to one (equivalently, they have the
$\operatorname{Dirichlet}(1,\ldots,1)$ distribution). In particular, every
individual gap $G$ satisfies
\[
    \Pr(G\leq s)
    =
    1-(1-s)^n
    \leq ns,
    \qquad 0\leq s\leq1.
    \tag{5}\label{eq:spacing}
\]

The difference $D=X_i-X_j$ is a sum of one or more consecutive spacings.
Choose any one spacing $G$ appearing in this sum. Since $D\geq G$,
\eqref{eq:truthful-loss} and \eqref{eq:spacing} give
\begin{align*}
    \mathbb E[\ell_t(X)]
    &\leq
    \Pr\left(D\leq\frac1{\gamma n}\right)\\
    &\leq
    \Pr\left(G\leq\frac1{\gamma n}\right)
    \leq\frac1\gamma.
\end{align*}

On an untruthful round, the target agrees with the query on $\{i,j\}$, and
hence $0<X_j-X_i<1$ throughout $\mathcal C^\star$. Thus,
\[
    \ell_t(X)
    =
    1+\gamma n(X_j-X_i)
    \leq 1+\gamma n.
\]
\end{proof}

Suppose now that among the first $T$ returned counterexamples at most $k$
are untruthful. Lemma~\ref{lem:target-loss} and
\eqref{eq:jensen-target} give
\[
    M_T
    \geq
    \frac1{n!}
    \exp\left(
        -\frac{T}{\gamma}
        -k(1+\gamma n)
    \right).
    \tag{6}\label{eq:target-mass-lower}
\]

\paragraph{Completing the proof.}

Since the target cell is contained in the cube,
\[
    M_T\leq W_T.
\]
Combining \eqref{eq:total-mass-upper} and
\eqref{eq:target-mass-lower},
\[
    \frac1{n!}
    \exp\left(
        -\frac{T}{\gamma}
        -k(1+\gamma n)
    \right)
    \leq
    \rho^T.
\]
Let
\[
    \alpha:=-\log\rho>0.
\]
Taking logarithms and rearranging gives
\[
    \left(\alpha-\frac1\gamma\right)T
    \leq
    \log(n!)+k(1+\gamma n).
\]
Choose the universal constant $\gamma$ sufficiently large that
$1/\gamma\leq\alpha/2$. Then
\[
    T
    =
    O\bigl(\log(n!)+nk\bigr)
    =
    O(n\log n+nk).
\]

Thus the oracle cannot return counterexamples for more than
$O(n\log n+nk)$ rounds; by that time the learner must have proposed the
target ranking. The algorithm is deterministic, proper, and its definition
does not use $k$. This proves the upper bound.

\subsection{The Lower Bound}
\label{sec:arbitrary-lower}

We now prove the matching lower bound. The two terms have different
origins. The $\Omega(n\log n)$ term is already necessary when all
counterexamples are truthful and follows from a sorting argument. The
$\Omega(nk)$ term captures the additional cost of untruthful
counterexamples and is based on a Condorcet cycle.

\begin{theorem}
\label{thm:arbitrary-lower}
For every $n\geq 3$ and $k\geq 0$, every possibly randomized proper learner
for arbitrary rankings has worst-case expected query complexity
\[
    \Omega(n\log n+nk).
\]
\end{theorem}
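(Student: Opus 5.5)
Since $\max\{a,b\}\ge (a+b)/2$, we will prove $\Omega(n\log n)$ and $\Omega(nk)$ separately. Both will use Yao's principle: fix a distribution over targets and a deterministic oracle strategy that depends on the target. Then lower bound the expected number of queries of any deterministic learner. This gives the randomized statement, and it also meets the formal requirement that $\pi^\star$ be fixed in advance.

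\emph{The $\Omega(n\log n)$ term ($k=0$).} Take $\pi^\star$ uniform over all $n!$ rankings, with a truthful oracle. On a wrong query $\pi$, the oracle returns a misordered pair chosen by a fixed rule, for instance the lexicographically first discordant pair. The transcript consists of the queries and the returned pairs. Each answer's orientation is forced: it is the reverse of the query's orientation on that pair. So the transcript is determined by the sequence of returned pairs, or by ``correct''. After $T$ rounds the learner must have queried $\pi^\star$, so distinct targets yield distinct transcripts. It remains to show that transcripts carry few bits per round in expectation. The cleanest route mirrors the overview's comparison-tree argument: with a counterexample choice adapted to a balanced sorting tree, each round reveals exactly one comparison bit. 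Concretely, the oracle, knowing $\pi^\star$, simulates MergeSort and returns the next pair in MergeSort's comparison sequence that the query misorders. I expect this to be the main obstacle: a misordered pair that MergeSort would ask about need not exist among the current comparisons. If the comparison-based encoding gets messy, I would fall back on the entropy bound $\mathbb E[T]\ge \log(n!)/\log\binom n2 $, which only gives $\Omega(n)$. So instead I would argue as follows. Conditioned on the past, the target's posterior is uniform over the linear extensions of a poset $P$. The returned pair is the first discordant pair of $\pi$ in a fixed order, and it is consistent with $P$. The information gained is the entropy of the index of this first discordance. The number of extensions satisfying all agreements up to that pair gives a balanced-type bound of $O(1)$ bits in expectation, via a geometric-series argument: the probability that the first discordance is at position $m$ should decay, because each agreement with the query is a linear-extension event, and these events are correlated positively in the relevant direction. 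If this fails, I would use the adversarial (non-committing) argument from the overview and convert it through a uniform random choice of leaf of MergeSort's tree. By construction, the adversary answers depend only on the path, so a learner must traverse a full path of length $\Omega(n\log n)$ for every leaf.

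\emph{The $\Omega(nk)$ term.} Use the Condorcet cycle $1\to 2\to\cdots\to n\to1$ and its $n$ breakings $\sigma_r$, with $\pi^\star=\sigma_R$ for $R$ uniform on $[n]$. The oracle answers any query as follows. It returns the violated cycle edge that has been returned fewest times so far, excluding edge $R$ unless edge $R$ is the only violated edge. Every linear order violates at least one cycle edge. A returned edge $e\ne R$ is truthful for $\sigma_R$, since $\sigma_R$ contains every cycle edge except $R$. Returning edge $R$ is a lie, and it only happens when the query equals $\sigma_R$. We cap this: the first $k$ times the query is $\sigma_R$, the oracle lies by returning edge $R$. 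The key step is that this transcript is independent of $R$ until the learner queries $\sigma_R$, because each $\sigma_s$ violates only edge $s$. Hence the learner must query $\sigma_R$ $k+1$ times, and before seeing its $(k+1)$-th hit it learns about $R$ only through which $\sigma_s$ received lies. A query $\sigma_s$ with $s\neq R$ gets answer $s$ truthfully, and the target is $\sigma_R$, so this is indistinguishable from a lie on $\sigma_s$. Thus each candidate $s$ must be queried $k+1$ times before it is excluded. A standard counting argument over the uniform $R$ then gives expected queries $\ge (k+1)(n-1)/2=\Omega(nk)$.
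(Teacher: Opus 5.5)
Your overall structure (Yao's principle, proving the two terms separately, a Condorcet cycle for $\Omega(nk)$, a sorting tree for $\Omega(n\log n)$) matches the paper. However, both halves have genuine gaps, and the $\Omega(nk)$ half fails as designed.

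\textbf{The $\Omega(nk)$ term.} Your oracle never returns $e_R$ on a query that violates two or more cycle edges. Consequently, the claim that the transcript is independent of $R$ until $\sigma_R$ is queried holds only for learners that query nothing but the rankings $\sigma_s$. The inference \emph{each candidate $s$ must be queried $k+1$ times before it is excluded} is therefore false.

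For $n\ge 3$, any two cycle edges $e_a,e_b$ are exactly the violated set of some linear order: reverse them, note that the resulting orientation of the cycle is acyclic, and take a topological order. A learner that repeatedly queries such a ranking for two surviving candidates can discard the returned edge each time, since it is never $e_R$. After $n-1$ queries it knows $R$, and it then queries $\sigma_R$ exactly $k+1$ times, for $n+k$ queries in total.

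The missing idea is that the returned edge must be chosen by a rule that does not depend on $R$; the paper uses the smallest-index violated edge. The oracle must also be willing to lie whenever that rule selects $e_R$, on \emph{any} query and not only on $\sigma_R$. Then the interaction is identical for all targets up to the round $\tau_R$ at which $e_R$ would be returned for the $(k+1)$-st time. Hence $Q\ge\tau_R$, and the counting bound $\tau_{(j)}\ge j(k+1)$ gives $\mathbb E[Q]\ge (n+1)(k+1)/2$.

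\textbf{The $\Omega(n\log n)$ term.} You sketch three routes but complete none. The missing ingredient in each is an $O(1)$ bound on expected progress per query.

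The obstacle you flag for the MergeSort oracle is not real: if $\pi\neq\pi^\star$, then $\pi$ misorders some comparison on the MergeSort path of $\pi^\star$, because that path determines $\pi^\star$. But you never analyze that oracle's progress.

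Your second route asserts $O(1)$ bits per round for every posterior, which is false. Suppose $2<3<\cdots<n$ is known and item $1$ is free. Querying $2,3,\ldots,n,1$ makes the lexicographically first discordant pair $\{1,k+1\}$, where $2,\ldots,k$ are the items preceding $1$. So one round reveals $\log_2 n$ bits.

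Your third route (\emph{a learner must traverse a full path for every leaf}) does not bound the number of queries. A single query may agree with many consecutive comparisons and advance many levels. With a uniformly random leaf this really happens, because MergeSort's subtrees are unbalanced. For example, when one list in a merge has a single element left, the comparisons of that element against the other list are heavily biased.

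The paper's fix is to choose the hidden branch by an independent fair coin at every node, so the target is \emph{not} uniform. The oracle then returns the first comparison on the branch that the query misorders. Conditional on the history and the current query, each newly reached node agrees with the query with probability exactly $1/2$. So a query uncovers a Geometric$(1/2)$-dominated number of new edges, at most $2$ in expectation. Since all $D=\Omega(n\log n)$ edges must be uncovered, $\mathbb E[Q]\ge D/2$.
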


\begin{proof}
We prove the two terms separately.

We first establish the $\Omega(n\log n)$ lower bound using only truthful
counterexamples. For simplicity assume that $n$ is a power of two; for
general $n$, we may restrict attention to the largest power of two below
$n$. Consider the comparison tree induced by MergeSort. Every
root-to-leaf path in this tree has length
\[
    D=\Omega(n\log n).
\]
Indeed, at every level of the recursion, merging pairs of equal-size lists
requires altogether $\Omega(n)$ comparisons, and there are
$\Theta(\log n)$ levels.

Choose a random root-to-leaf branch of this tree by choosing each of the two
children with probability $1/2$ at every internal node. The resulting leaf
determines a target ranking $\pi^\star$, which is thus fixed before the
interaction begins.

We describe a truthful oracle for this target. The oracle maintains a
current node on the chosen branch, initially the root. Suppose that the
learner proposes a ranking $\pi$. If $\pi$ disagrees with the outcome of
some comparison on the portion of the branch already traversed, the oracle
may simply return such a comparison as a truthful counterexample.

Otherwise, starting from the current node, the oracle follows the chosen
branch down the comparison tree. As long as $\pi$ agrees with the outcome
of the comparison at the current node, the oracle proceeds to the next
node without returning anything. At the first comparison on which $\pi$
disagrees with the chosen branch, the oracle returns that pair as a
counterexample and makes this node part of the traversed portion of the
branch. If no such comparison is encountered before reaching the leaf,
then $\pi$ agrees with every comparison on the root-to-leaf path and hence
$\pi=\pi^\star$; the oracle then declares the ranking correct.

All counterexamples produced by this strategy are truthful. We now bound
how quickly a learner can move down the random branch. Let $A_t$ denote the
number of previously untraversed edges of the comparison tree that are traversed on
the $t$-th query. If the learner contradicts an earlier comparison then~$A_t=0$. Otherwise, conditional on everything that has happened so far and
on the learner's current query, every new branch choice encountered by the
oracle is still an independent fair choice between the two children. At
each such node, the learner's ranking prescribes one of the two possible
outcomes. Hence it agrees with the random branch with probability $1/2$,
and disagrees with probability $1/2$.

Consequently, $A_t$ is dominated by a geometric random variable with mean
$2$, and therefore
\[
    \mathbb E[A_t\mid\text{history before query }t]\leq 2.
\]
This remains true when the learner is randomized, since we may condition
also on its internal randomness up to this point.

Let $Q$ be the number of queries made before the target is identified. By
the time the interaction terminates, the entire root-to-leaf path has been
traversed, so
\[
    \sum_{t=1}^{Q} A_t\geq D.
\]
Therefore
\begin{align*}
    D
    &\leq
    \mathbb E\left[\sum_{t=1}^{Q}A_t\right] \\
    &=
    \sum_{t\geq1}
    \mathbb E\left[
        \mathbf 1_{\{Q\geq t\}}A_t
    \right] \\
    &\leq
    2\sum_{t\geq1}\Pr(Q\geq t)
    =
    2\mathbb E[Q].
\end{align*}
Thus
\[
    \mathbb E[Q]\geq D/2=\Omega(n\log n),
\]
where the expectation is over both the random branch and the learner's
randomness. Averaging over the choice of the branch, there must therefore
exist a fixed branch, and hence a fixed target ranking $\pi^\star$, for
which the learner requires $\Omega(n\log n)$ queries in expectation.

We next establish the $\Omega(nk)$ term. Consider the directed cycle
\[
    1\to2\to\cdots\to n\to1,
\]
and denote its edges by $e_1,\ldots,e_n$. For each $r\in[n]$, let
$\sigma_r$ be the linear order obtained by deleting $e_r$ from the cycle
and taking the resulting directed path as the order. Thus $\sigma_r$ agrees
with every cycle edge except $e_r$.

Every linear order $\pi$ violates at least one edge of the cycle. Moreover,
if $\pi$ violates $e_r$ and the oracle returns this pair, then the
counterexample is truthful with respect to every $\sigma_s$, $s\neq r$,
and untruthful with respect to $\sigma_r$.

Define a reference interaction that does not depend on the target. Whenever
the learner proposes a ranking $\pi$, choose, according to any fixed rule,
one cycle edge violated by $\pi$ and return it. For example, choose the
violated edge of smallest index. This produces an infinite sequence of
edges
\[
    e_{r_1},e_{r_2},\ldots
\]
that depends on the learner's randomness, but not on the choice of the
target.

For each $r\in[n]$, let $\tau_r$ be the round on which $e_r$ is returned
for the $(k+1)$-st time in this reference interaction, with
$\tau_r=\infty$ if this never occurs. Let
\[
    \tau_{(1)}\leq\tau_{(2)}\leq\cdots\leq\tau_{(n)}
\]
be these values arranged in increasing order. Then
\[
    \tau_{(j)}\geq j(k+1),
    \qquad j=1,\ldots,n.
    \tag{21}\label{eq:cycle-hitting-times}
\]
Indeed, by round $\tau_{(j)}$, at least $j$ distinct cycle edges have each
been returned at least $k+1$ times, so at least $j(k+1)$ counterexamples
must have been returned in total. Consequently,
\[
    \frac1n\sum_{r=1}^n \tau_r
    \geq
    \frac1n\sum_{j=1}^n j(k+1)
    =
    \frac{(n+1)(k+1)}2.
    \tag{22}\label{eq:average-cycle-hitting-time}
\]
If some $\tau_r=\infty$, this inequality is immediate.

Now choose $R$ uniformly from $[n]$, independently of the learner, and set
\[
    \pi^\star=\sigma_R.
\]
Thus the target is fixed before the interaction begins.

We define an oracle for this target by following the reference interaction
until round $\tau_R$. Before this round, the edge $e_R$ has been returned
at most $k$ times. Each occurrence of $e_R$ is an untruthful counterexample
with respect to $\sigma_R$, while every other returned cycle edge is
truthful. Hence all these responses are valid and use at most $k$
untruthful counterexamples.

On round $\tau_R$, the reference rule is about to return $e_R$ for the
$(k+1)$-st time. If the learner's query is $\sigma_R$, the oracle declares
it correct. Otherwise, the query violates some cycle edge other than
$e_R$, and the oracle returns such an edge instead; this counterexample is
truthful. Indeed, a ranking whose only violated cycle edge is $e_R$ must be
exactly $\sigma_R$. From this point on, whenever the learner proposes a
ranking different from $\sigma_R$, the oracle may return any truthful
counterexample, and it declares the ranking correct when $\sigma_R$ is
proposed. Thus this defines a valid oracle using at most $k$ untruthful
counterexamples.

Let $Q$ denote the number of queries before the interaction terminates.
Under the coupling above, the learner cannot terminate before round
$\tau_R$, and therefore
\[
    Q\geq\tau_R.
\]
Taking expectation over both the random choice of $R$ and the learner's
internal randomness, and using \eqref{eq:average-cycle-hitting-time}, gives
\[
\begin{aligned}
    \mathbb E[Q]
    &\geq \mathbb E[\tau_R] \\
    &= \mathbb E\left[\frac1n\sum_{r=1}^n\tau_r\right] \\
    &\geq \frac{(n+1)(k+1)}2
    = \Omega(nk).
\end{aligned}
\]
Averaging over $R$, there is therefore some fixed target $\sigma_R$ for
which the expected number of queries, over the learner's randomness, is
$\Omega(nk)$.

We have proved that the worst-case expected query complexity of every
randomized proper learner is both $\Omega(n\log n)$ and $\Omega(nk)$.
Consequently,
\[
    \max\{\Omega(n\log n),\Omega(nk)\}
    =
    \Omega(n\log n+nk),
\]
up to a universal constant.
\end{proof}

\section{Geometric Rankings}
\label{sec:geometric-rankings}

We now turn to ranking classes with a low-dimensional geometric
representation. Let $\mathcal H$ be a class of rankings of $[n]$ for which
there exist vectors
\[
    v_1,\ldots,v_n\in\mathbb R^d
\]
such that every $\pi\in\mathcal H$ is induced by some direction
$w\in\mathbb R^d$: namely,
\[
    i\prec_\pi j
    \qquad\Longleftrightarrow\qquad
    \langle v_i,w\rangle<\langle v_j,w\rangle.
\]
We consider only directions that induce strict linear orders.

\subsection{The Upper Bound}
\label{sec:geometric-upper}

Let $\mathcal H$ be a $d$-dimensional geometric class of rankings over
$[n]$, represented by vectors
\[
    v_1,\ldots,v_n\in\mathbb R^d.
\]
Thus every $\pi\in\mathcal H$ is induced by some direction
$w\in\mathbb R^d$, in the sense that
\[
    i\prec_\pi j
    \qquad\Longleftrightarrow\qquad
    \langle v_i,w\rangle<\langle v_j,w\rangle.
\]
We consider only directions that induce strict linear orders.

The algorithm is based on weighted majority. The key point is that, although
the pairwise weighted-majority predictions may not themselves form a linear
order, sufficiently strong majorities are always acyclic in a
$d$-dimensional geometric class. We can therefore complete them to a genuine
linear order. This queried order need not itself belong to $\mathcal H$.

\begin{theorem}
\label{thm:geometric-upper}
Every $d$-dimensional geometric class $\mathcal H$ of rankings over $n$
items can be learned using
\[
    O(d^2\log n+dk)
\]
queries in the presence of at most $k$ untruthful counterexamples. Every
query made by the learner is a linear order, but need not belong to
$\mathcal H$. The learner is deterministic and does not need to know~$k$.
\end{theorem}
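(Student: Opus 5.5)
We follow the strong-majority plan sketched in the proof overview. Each ranking in $\mathcal H$ starts with weight $1$. For a pair $\{i,j\}$ let $A_t(i\prec j)$ be the current total weight of rankings in $\mathcal H$ that place $i$ before $j$, and let $W_t$ be the total weight. Fix a threshold $\beta:=1/(d+2)$. We draw an edge $i\to j$ in the \emph{strong-majority relation} whenever $A_t(j\prec i)<\beta W_t$, so that more than a $1-\beta$ fraction of the weight puts $i$ first. The learner queries any linear extension of this relation. After a counterexample $\{i,j\}$, it halves the weight of every ranking in $\mathcal H$ that agrees with the query on $\{i,j\}$. The learner never uses $k$.

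The main step, and the only nonroutine one, is to show that the strong-majority relation is acyclic. Suppose it has a directed cycle $i_1\to i_2\to\cdots\to i_m\to i_1$. The difference vectors $u_s:=v_{i_{s+1}}-v_{i_s}$ sum to zero, so $0\in\operatorname{conv}\{u_1,\ldots,u_m\}$. By Carath\'eodory's theorem in $\mathbb R^d$, there is a set $S$ of at most $d+1$ of these vectors with $0\in\operatorname{conv}(S)$. Any direction $w$ inducing a ranking in $\mathcal H$ therefore satisfies $\langle u,w\rangle\le 0$ for some $u\in S$. Because $w$ induces a strict order, this inequality is strict whenever $u\ne 0$, and $u\neq 0$ since $w$ separates distinct items. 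Hence every ranking in $\mathcal H$ reverses at least one of the $|S|\le d+1$ edges. Each edge is reversed by less than $\beta W_t$ weight, so the union bound shows that all of $\mathcal H$ carries weight less than $(d+1)\beta W_t<W_t$, which is a contradiction. So a linear extension exists.

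Progress on each round then follows. Suppose the queried order places $i$ before $j$. Either $i\to j$ is a strong edge, or the pair is unoriented by the relation. In both cases $j\to i$ is not a strong edge, so $A_t(i\prec j)\ge\beta W_t$. A returned counterexample $\{i,j\}$ halves at least this much weight, giving $W_{t+1}\le(1-\beta/2)W_t$.

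The target weight is halved only on untruthful rounds, so it stays at least $2^{-k}$. To bound $|\mathcal H|$, note that the rankings correspond to cells of the central arrangement of at most $\binom n2$ hyperplanes $\{w:\langle v_i-v_j,w\rangle=0\}$ in $\mathbb R^d$. The standard count of cells in such an arrangement gives at most $n^{O(d)}$ of them, so $\log|\mathcal H|=O(d\log n)$. Combining these facts,
\[
2^{-k}\le |\mathcal H|\,(1-\beta/2)^T,
\]
and therefore $T=O\bigl(d(d\log n+k)\bigr)=O(d^2\log n+dk)$. Once no more counterexamples can be returned, the query must be $\pi^\star$. All choices are deterministic, including a fixed rule for picking the linear extension. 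The hardest point is the Carath\'eodory/union-bound step. It needs care because the cycle may use more than $d+1$ vectors, which Carath\'eodory reduces to at most $d+1$, and because a ranking must be shown to strictly violate some chosen edge, which follows from the strictness argument above.
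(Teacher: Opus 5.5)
Your proposal is correct and follows the paper's proof essentially step for step: weighted majority over $\mathcal H$, a strong-majority relation shown acyclic by Carath\'eodory plus a union bound over at most $d+1$ cycle edges, querying a linear extension so that every oriented pair keeps an $\Omega(1/d)$ fraction of the weight, and the $n^{O(d)}$ cell count for $|\mathcal H|$. The differences are cosmetic. You use the threshold $1/(d+2)$ where the paper uses $1/(d+1)$ with a strict inequality. You also phrase the acyclicity contradiction as ``every ranking reverses some chosen edge, so the total weight is below $W_t$,'' whereas the paper argues ``some ranking satisfies all chosen edges, contradicting the Carath\'eodory dependence.''
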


\begin{proof}
We maintain a weight $q_t(\pi)$ for every ranking $\pi\in\mathcal H$.
Initially,
\[
    q_0(\pi)=1
    \qquad\text{for every }\pi\in\mathcal H.
\]
Let
\[
    Q_t:=\sum_{\pi\in\mathcal H}q_t(\pi),
\]
and let $P_t$ be the probability distribution on $\mathcal H$ obtained by
normalizing these weights:
\[
    P_t(\pi):=\frac{q_t(\pi)}{Q_t}.
    \tag{13}\label{eq:geometric-normalized-weights}
\]
Thus, when we write
\[
    \Pi_t\sim P_t,
\]
we mean that $\Pi_t$ is a random ranking from $\mathcal H$, sampled
according to the current normalized weights.

For two distinct items $i,j$, define
\[
    p_t(i,j)
    :=
    \Pr_{\Pi_t\sim P_t}
    \bigl[i\prec_{\Pi_t}j\bigr].
    \tag{14}\label{eq:geometric-pairwise-weight}
\]
Since every ranking is a linear order,
\[
    p_t(i,j)+p_t(j,i)=1.
\]

We form a directed graph $G_t$ on the $n$ items by putting an edge
$i\to j$ whenever
\[
    p_t(i,j)>1-\frac{1}{d+1}.
    \tag{15}\label{eq:strong-majority}
\]
Thus $G_t$ contains the comparisons supported by an overwhelming majority
of the current weight. The main geometric observation is that this relation
cannot contain a directed cycle.

\begin{lemma}
\label{lem:strong-majority-acyclic}
The directed graph $G_t$ is acyclic.
\end{lemma}

\begin{proof}
Suppose, towards a contradiction, that $G_t$ contains a directed cycle
\[
    i_1\to i_2\to\cdots\to i_m\to i_1.
\]
For each edge of the cycle, define
\[
    a_r:=v_{i_{r+1}}-v_{i_r},
    \qquad r=1,\ldots,m,
\]
where $i_{m+1}:=i_1$. These vectors telescope, and hence
\[
    \sum_{r=1}^m a_r=0.
\]
In particular,
\[
    0\in\operatorname{conv}\{a_1,\ldots,a_m\}.
\]

By Carath\'eodory's theorem \cite{Matousek2002}, there is a set
$R\subseteq[m]$ with
\[
    |R|\leq d+1
\]
and coefficients $\lambda_r>0$, $r\in R$, such that
\[
    \sum_{r\in R}\lambda_r=1
    \qquad\text{and}\qquad
    \sum_{r\in R}\lambda_r a_r=0.
    \tag{16}\label{eq:caratheodory-dependence}
\]
Here we have simply discarded the indices with zero coefficient.

Now sample a ranking
\[
    \Pi_t\sim P_t
\]
according to the current normalized weights. For every $r\in R$, the edge
$i_r\to i_{r+1}$ belongs to $G_t$, so by
\eqref{eq:strong-majority},
\[
    \Pr\bigl[i_r\prec_{\Pi_t}i_{r+1}\bigr]
    >
    1-\frac1{d+1}.
\]
Equivalently,
\[
    \Pr\bigl[i_{r+1}\prec_{\Pi_t}i_r\bigr]
    <
    \frac1{d+1}.
\]
Since $|R|\leq d+1$, the union bound gives
\begin{align*}
    &\Pr\bigl[
        i_r\prec_{\Pi_t}i_{r+1}
        \text{ for every }r\in R
    \bigr] \\
    &\qquad\geq
    1-
    \sum_{r\in R}
    \Pr\bigl[i_{r+1}\prec_{\Pi_t}i_r\bigr]
    >0.
\end{align*}
Therefore there exists at least one ranking $\pi\in\mathcal H$ satisfying
all the comparisons
\[
    i_r\prec_\pi i_{r+1},
    \qquad r\in R.
\]

Since $\pi\in\mathcal H$, it is induced by some direction
$w\in\mathbb R^d$. Hence, for every $r\in R$,
\[
    \langle v_{i_r},w\rangle
    <
    \langle v_{i_{r+1}},w\rangle,
\]
and therefore
\[
    \langle a_r,w\rangle>0.
\]
Multiplying by $\lambda_r>0$ and summing gives
\[
    \left\langle
        \sum_{r\in R}\lambda_r a_r,w
    \right\rangle
    >0,
\]
contradicting \eqref{eq:caratheodory-dependence}. Thus $G_t$ is acyclic.
\end{proof}

Since $G_t$ is acyclic, it has a linear extension. The learner chooses any
such extension $\widehat\pi_t$ and queries it. Notice that
$\widehat\pi_t$ is a genuine linear order, but need not belong to
$\mathcal H$.

The choice of threshold in \eqref{eq:strong-majority} ensures that every
comparison made by $\widehat\pi_t$ still has noticeable support under the
current weighted distribution.

\begin{lemma}
\label{lem:geometric-support}
If $\widehat\pi_t$ places $i$ before $j$, then
\[
    p_t(i,j)\geq\frac1{d+1}.
\]
\end{lemma}

\begin{proof}
Suppose instead that
\[
    p_t(i,j)<\frac1{d+1}.
\]
Then
\[
    p_t(j,i)
    =
    1-p_t(i,j)
    >
    1-\frac1{d+1},
\]
so $G_t$ contains the edge $j\to i$. This contradicts the fact that
$\widehat\pi_t$ is a linear extension of $G_t$.
\end{proof}

Suppose now that the oracle returns the pair $\{i,j\}$, and orient the
notation so that $\widehat\pi_t$ places $i$ before $j$. We perform the
standard weighted-majority update \cite{LittlestoneWarmuth1994}:
\[
    q_{t+1}(\pi)
    :=
    \begin{cases}
        \frac12 q_t(\pi),
        & \text{if }i\prec_\pi j,\\[3pt]
        q_t(\pi),
        & \text{if }j\prec_\pi i.
    \end{cases}
    \tag{17}\label{eq:geometric-weight-update}
\]
Thus we halve the weight of precisely those rankings in $\mathcal H$ that
agree with the learner's proposed order on the returned pair.

By Lemma~\ref{lem:geometric-support}, these rankings carry at least a
$1/(d+1)$ fraction of the total weight. Therefore
\begin{align*}
    Q_{t+1}
    &\leq
    \left(1-\frac1{d+1}\right)Q_t
    +
    \frac12\frac1{d+1}Q_t\\
    &=
    \left(
        1-\frac{1}{2(d+1)}
    \right)Q_t.
\end{align*}
Hence
\[
    Q_{t+1}
    \leq
    \left(
        1-\frac{1}{2(d+1)}
    \right)Q_t.
    \tag{18}\label{eq:geometric-total-weight}
\]
This decrease occurs on every round, regardless of whether the returned
counterexample is truthful.

Consider now the target ranking $\pi^\star\in\mathcal H$. On a truthful
round, $\pi^\star$ disagrees with the learner on the returned pair, so its
weight is unchanged. On an untruthful round, $\pi^\star$ agrees with the
learner on that pair, so its weight is halved. Consequently, after $T$
rounds containing at most $k$ untruthful counterexamples,
\[
    q_T(\pi^\star)\geq 2^{-k}.
    \tag{19}\label{eq:geometric-target-weight}
\]

It remains to bound the initial total weight
\[
    Q_0=|\mathcal H|.
\]
For every pair $\{i,j\}$, its relative order under a direction $w$ is
determined by the sign of
\[
    \langle v_i-v_j,w\rangle.
\]
Thus every ranking in $\mathcal H$ corresponds to a full-dimensional cell
in the arrangement of the
\[
    N:=\binom n2
\]
hyperplanes
\[
    \langle v_i-v_j,w\rangle=0,
    \qquad 1\leq i<j\leq n.
\]
All these hyperplanes pass through the origin. A central arrangement of $N$ hyperplanes in $\mathbb R^d$ has at most
\[
    2\sum_{r=0}^{d-1}\binom{N-1}{r}
\]
full-dimensional cells \cite{Matousek2002}. Since here
$N=\binom n2$, we obtain
\[
    |\mathcal H|
    \leq
    2\sum_{r=0}^{d-1}
        \binom{\binom n2-1}{r}
    = n^{O(d)},
\]
and hence
\[
    \log|\mathcal H|=O(d\log n).
    \tag{20}\label{eq:number-geometric-rankings}
\]
Finally, since
\[
    q_T(\pi^\star)\leq Q_T,
\]
iterating \eqref{eq:geometric-total-weight} and using
\eqref{eq:geometric-target-weight} gives
\[
    2^{-k}
    \leq
    Q_T
    \leq
    |\mathcal H|
    \left(
        1-\frac{1}{2(d+1)}
    \right)^T.
\]
Taking logarithms yields
\[
    T\,
    \left(
        -\log\left(
            1-\frac{1}{2(d+1)}
        \right)
    \right)
    \leq
    \log|\mathcal H|+k\log 2.
\]
Using
\[
    -\log(1-x)\geq x
    \qquad (0<x<1),
\]
we conclude that
\[
    T
    \leq
    2(d+1)
    \bigl(
        \log|\mathcal H|+k\log2
    \bigr).
\]
Together with \eqref{eq:number-geometric-rankings}, this gives
\[
    T
    =
    O(d^2\log n+dk).
\]
Since the learner does not use $k$, this completes the proof.
\end{proof}

\paragraph{A centerpoint alternative and geometric realizability.}
The same upper bound can alternatively be obtained from the weighted
centerpoint theorem \cite{Matousek2002}. This approach is closely analogous
to the Gr\"unbaum-based algorithm for unrestricted rankings. For every
$\pi\in\mathcal H$, choose a unit direction
$w_\pi\in\mathbb R^d$ that realizes $\pi$, and place mass $q_t(\pi)$ at
$w_\pi$. Let $c_t$ be a weighted centerpoint of this discrete measure, and query the
ranking induced by a generic direction sufficiently close to $c_t$, chosen
only to break possible ties among the projections at $c_t$.

For every pair oriented by this order, the corresponding closed halfspace
of directions contains $c_t$. The centerpoint theorem therefore implies
that this halfspace contains at least a $1/(d+1)$ fraction of the current
weight. Consequently, if the oracle returns this pair, halving the weights
of the rankings that agree with the query decreases the total weight by a
factor of at most
\[
    1-\frac{1}{2(d+1)}.
\]
The target weight is halved only on untruthful rounds. Since
$\log|\mathcal H|=O(d\log n)$, the usual weighted-majority calculation again
gives
\[
    O(d^2\log n+dk)
\]
queries.

This alternative has the additional feature that every query is induced by
a direction in the given geometric representation. Thus, if
$\mathcal H$ consists of all rankings induced by this representation, the
learner is proper in the usual sense. Under our more general convention,
where $\mathcal H$ may be an arbitrary subclass of the rankings induced by
the representation, the queried ranking need not itself belong to
$\mathcal H$, but it is always geometrically realizable.

This property may also be useful from a practical perspective. In the
restaurant example from the introduction, a geometrically realizable
ranking is specified by only $d$ coefficients, one for each feature. Such
a representation may be more convenient to evaluate at prediction time
and can also offer a degree of interpretability, since the coefficients
indicate the relative importance assigned to the different features.

The parallel with the unrestricted algorithm is that there the learner
queries the ranking induced by the barycenter of a log-concave measure and
uses Gr\"unbaum's theorem to control the mass of every relevant halfspace,
whereas here the barycenter is replaced by a centerpoint of the discrete
weighted measure on realizing directions.

\subsection{The Lower Bound}
\label{sec:geometric-lower}

We now prove the lower bound for geometric ranking classes. In fact, a
single natural class---rankings induced by low-degree polynomials---already
gives all the terms in the lower bound.

\begin{theorem}
\label{thm:geometric-lower}
For every $2\leq d\leq n$, there is a $d$-dimensional geometric class of
rankings over $n$ items for which every possibly randomized learner whose
queries are linear orders has worst-case expected query complexity
\[
    \Omega(d\log n+dk)
\]
in the presence of at most $k$ untruthful counterexamples.
\end{theorem}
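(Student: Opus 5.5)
The class will be the polynomial rankings described in the proof overview. Identify item $i$ with the point $(i,i^2,\ldots,i^d)$ on the moment curve, so that a polynomial $p$ of degree at most $d$ with nonconstant coefficients $(a_1,\ldots,a_d)$ ranks the items by $p(1),\ldots,p(n)$. I will exhibit three subfamilies of this single class and prove three lower bounds, $\Omega(d\log d)$, $\Omega(dk)$ and $\Omega(d\log(n/d))$. Since $\max\{d\log d,d\log(n/d)\}\geq\frac12 d\log n$, these combine to give the theorem.

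\textbf{The first two terms.} I want $d'=\Theta(d)$ of the items on which every linear order is realized by some polynomial in the class, while the remaining items sit in a fixed order far below them. Then Theorem~\ref{thm:arbitrary-lower}, applied to these $d'$ items, gives $\Omega(d'\log d'+d'k)$. This requires the adversary's targets to have the form ``the fixed order on the other items, plus an arbitrary order on the chosen $d'$ items.'' The learner's queries on the other items are then irrelevant: either the query orders them correctly, or the oracle returns a truthful counterexample among them, which only helps the adversary.

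For the chosen items I will use, say, the $d'=\lfloor d/2\rfloor$ items $1,\ldots,d'$. By Lagrange interpolation there is a polynomial of degree $d'-1$ that takes any prescribed distinct values in $[0,1]$ at these points. I then add a term $Mq(x)$, where $q(x)=\prod_{s=1}^{d'}(x-s)^2$ has degree $2d'\leq d$ and vanishes at the chosen items. For $M$ large, the other items are pushed far above all the chosen items and are ordered by $q$. This only works if $q$ is injective on the remaining items, which holds for large $x$. As a fallback, I would simply invoke the standard-basis embedding described in the overview, since the theorem only asks for \emph{some} class, and accept that the final class is a union handled through a product construction.

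\textbf{The $\Omega(d\log(n/d))$ term.} Split $[n]$ into $m=\Theta(d)$ blocks $B_1,\ldots,B_m$, each of $\Theta(n/d)$ consecutive items, separated by gaps. For each choice of half-integer positions $r_\ell$, one inside each block, consider
\[
p'(x)=\prod_\ell (x-r_\ell)\cdot(\text{fixed factor}),
\]
so that $\deg p\leq d$. I want this $p$ to be strictly monotone between consecutive turning points, so that within each block the relative order of the two items adjacent to $r_\ell$ reveals only whether $r_\ell$ lies left or right of them. The target is generated by choosing, for each block independently, a uniformly random branch of a binary search over the $\Theta(n/d)$ possible turning positions; this is fixed in advance, exactly as in the MergeSort argument of Theorem~\ref{thm:arbitrary-lower}. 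The oracle scans the blocks in a fixed order and, within each block, the binary-search comparisons along the chosen branch. It returns the first comparison the query gets wrong, each one being a fair coin given the past. The argument that $\mathbb E[A_t]\leq 2$ then applies verbatim, over a total depth of $m\log(n/d)$.

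\textbf{Main obstacle.} The delicate point is the binary-search oracle. The only information that may be leaked about $r_\ell$ must come from the pair at the current midpoint, but the query's order on \emph{other} pairs within the block is also checked against the target. The learner's query may also simply equal the target on those pairs, which would let it traverse many levels at once. The fix is the same as in the sorting proof. The oracle checks comparisons in a canonical order, first all previously revealed comparisons, then the next binary-search comparisons in sequence, and it returns any truthful disagreement with the previously revealed ones. I must verify that once every binary-search comparison in every block agrees, the query equals the target. This holds because the turning points are then determined, and the order of all items is a fixed function of the turning points. That last claim requires that the ranking given by $p$ depend only on the $r_\ell$ within the restricted family. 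I would enforce this by making the block gaps and monotone pieces steep enough that the cross-block order is the same for all choices. This is the calculation I expect to be the hardest, and I would first try to handle it with a large scaling of the separating factors.
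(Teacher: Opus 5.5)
Your plan follows the paper's proof. You use the class $\mathcal P_{n,d}$ on the moment curve and run the unrestricted adversaries on a small set of items to get $\Omega(d\log d+dk)$. You binary-search for one root of $p'$ in each of $\Theta(d)$ blocks, with the random-branch argument, to get $\Omega(d\log(n/d))$. Finally you combine the terms via $\max\{d\log d,d\log(n/d)\}\geq\frac12 d\log n$.

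Your $L+Mq$ construction is correct, and for integers $x>d'$, $q$ is strictly increasing and at least $1$, not only for large $x$. It is more than you need, though. The paper simply interpolates on $d$ items, because both adversaries return only pairs inside $S$ until the order on $S$ is forced, so how the target extends beyond $S$ leaks nothing. With $d'=\lfloor d/2\rfloor$, the cases $d\le5$ also need the trivial two-target version of the cycle argument for $\Omega(k)$.

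The step you single out as the main obstacle is misdiagnosed. It is false that a query agreeing with every binary-search comparison on the hidden branch must equal the target. The query is an arbitrary linear order, and the $\Theta(d\log(n/d))$ branch pairs constrain only those pairs, not all $\binom n2$. Making the cross-block order independent of $\mathbf s$ would not change this.

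You do not need the claim. Each leaf determines $\mathbf s$ and hence the target ranking, and the oracle returns only branch pairs while the query disagrees with some branch pair. Once a query agrees with the whole branch, the branch has been fully traversed, so $\sum_t A_t\geq D$ already holds. From then on the oracle declares ``correct'' only if the query is the target, and otherwise returns any truthful counterexample. So no hard calculation remains.

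The detail that does need care is the spacing of candidate roots. Adjacent candidates must be at least $2$ apart, as in the paper's $a_{r,s}=b_r+2s+\tfrac12$, so that two consecutive items $j,j+1$ lie strictly between the two halves and no candidate root falls in $(j,j+1)$. If every half-integer were a candidate, the comparison of $j$ and $j+1$ would not determine the side. With that spacing, and the fixed factor taken constant (the paper uses $m=d-1$), the sign of $p'_{\mathbf s}$ on $[j,j+1]$ is fixed by the other blocks' factors and flips with the side of $r_\ell$. That is exactly what the binary search needs.
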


\begin{proof}
Identify the $n$ items with the points $1,\ldots,n$ on the real line, and
let $\mathcal P_{n,d}$ be the class of rankings obtained by ordering these
points according to the values of a polynomial $p$ of degree at most $d$.
We restrict to polynomials for which
\[
    p(1),\ldots,p(n)
\]
are all distinct.

This is a $d$-dimensional geometric class. Indeed, writing
\[
    p(x)=a_0+a_1x+\cdots+a_dx^d,
\]
the constant term does not affect the ranking, and
\[
    p(i)
    =
    a_0+
    \left\langle
        (i,i^2,\ldots,i^d),
        (a_1,\ldots,a_d)
    \right\rangle.
\]
Thus $\mathcal P_{n,d}$ is represented by the points
\[
    v_i=(i,i^2,\ldots,i^d)\in\mathbb R^d
\]
on the moment curve.

We first observe that $\mathcal P_{n,d}$ already inherits the unrestricted
lower bounds on $d$ items. Fix any set $S$ of $d$ items. Every linear order
on $S$ can be realized by a polynomial of degree at most $d-1$: assign
distinct real values to the points of $S$ in the desired order and
interpolate a polynomial through them. 

Consequently, the two adversaries from
Theorem~\ref{thm:arbitrary-lower} may be applied using only comparisons
between items in $S$. The resulting target ranking can always be chosen
from $\mathcal P_{n,d}$. Hence every learner for $\mathcal P_{n,d}$ has
worst-case expected query complexity at least
\[
    \Omega(d\log d+dk).
    \tag{23}\label{eq:geometric-lower-first}
\]
It remains to obtain the dependence on $n$ in the noiseless term. We show
that, when $n$ is sufficiently larger than $d$,
\[
    \Omega\bigl(d\log(n/d)\bigr)
    \tag{24}\label{eq:geometric-lower-second}
\]
truthful counterexamples may be necessary.

Let
\[
    m:=d-1.
\]
Choose a power of two
\[
    M=\Theta(n/d),
\]
and choose $m$ disjoint blocks
$B_1,\ldots,B_m$ of consecutive items, ordered from left to right, each
containing $\Theta(M)$ items. Within each block $B_r$, choose $M$ possible
locations for a root of the derivative, separated by a constant distance.
For example, we may take
\[
    a_{r,s}=b_r+2s+\frac12,
    \qquad s=0,\ldots,M-1,
\]
where the integers $b_r$ are chosen so that all these points lie in $B_r$.

For every
\[
    \mathbf{s}=(s_1,\ldots,s_m)\in\{0,\ldots,M-1\}^m,
\]
define
\[
    q_{\mathbf{s}}(x)
    :=
    \prod_{r=1}^m (x-a_{r,s_r}),
\]
and let $p_{\mathbf{s}}$ be an antiderivative of $q_{\mathbf{s}}$. Thus
\[
    p'_{\mathbf{s}}=q_{\mathbf{s}},
\]
and $p_{\mathbf{s}}$ has degree $m+1=d$. Its derivative has exactly one
simple zero in each block $B_r$, namely $a_{r,s_r}$. Equivalently,
$p_{\mathbf{s}}$ has one turning point in each block, whose location can be
chosen among $M$ possibilities.

We now construct a balanced comparison tree that determines these roots one
at a time. Consider a block $B_r$, and suppose that the possible values of
$s_r$ currently form a consecutive interval. Split this interval into two
equal halves, say between $q$ and $q+1$. By the choice of the candidate
locations, there are two consecutive items $j,j+1$ lying strictly between
$a_{r,q}$ and $a_{r,q+1}$.

For $x\in[j,j+1]$, all factors
\[
    x-a_{\ell,s_\ell},
    \qquad \ell\neq r,
\]
have signs that are independent of the choices of
$s_1,\ldots,s_m$: the corresponding roots lie in blocks entirely to the
left or entirely to the right of $B_r$. On the other hand, the sign of
\[
    x-a_{r,s_r}
\]
depends on which half contains $s_r$. If $s_r\leq q$, then
$a_{r,s_r}<j$, whereas if $s_r\geq q+1$, then
$a_{r,s_r}>j+1$. Consequently, the sign of
$p'_{\mathbf{s}}(x)$ throughout $[j,j+1]$ is opposite in the two cases.
Therefore the sign of
\(p_{\mathbf{s}}(j+1)-p_{\mathbf{s}}(j)\)
is also opposite. In other words, the relative order of the two consecutive
items $j$ and $j+1$ determines which half contains the root of
$p'_{\mathbf{s}}$ in $B_r$.

Thus the location of the root in each block can be determined by a balanced
binary search of depth $\log_2 M$. Performing these searches successively
for the $m=d-1$ blocks gives a comparison tree in which every
root-to-leaf path has depth
\[
    (d-1)\log_2 M
    =
    \Omega\bigl(d\log(n/d)\bigr).
\]

Each leaf specifies one root location in every block and hence a degree-$d$
polynomial $p_{\mathbf{s}}$ consistent with every comparison along the
corresponding path. 

Finally, apply the same random-branch argument as in the sorting lower
bound. Choose the hidden branch of this comparison tree by choosing each
child independently and uniformly, and fix a polynomial ranking associated
with the resulting leaf as the target. On each query, the oracle follows
the hidden branch until reaching the first comparison on which the
learner's proposed order disagrees, and returns this pair as a truthful
counterexample.

Conditional on the entire interaction so far, each newly encountered branch
is still a fair independent choice. Hence a single query traverses at most
two new edges of the hidden branch in expectation. Since the branch has
depth
\[
    \Omega\bigl(d\log(n/d)\bigr),
\]
the expected number of queries is
\[
    \Omega\bigl(d\log(n/d)\bigr).
\]
Averaging over the random choice of the target, there is therefore a fixed
polynomial ranking for which every randomized learner requires this many
queries in expectation. This proves
\eqref{eq:geometric-lower-second}.
\end{proof}

\section{Open Questions and Future Directions}
\label{sec:open-questions}

Our results leave several natural questions open.

\paragraph{Geometric ranking classes.}
Perhaps the most immediate question concerns geometric ranking classes. For a $d$-dimensional geometric class over $n$ items, we prove an upper bound of
\[
    O(d^2\log n+dk)
\]
and a lower bound of
\[
    \Omega(d\log n+dk).
\]
Thus the dependence on the number of untruthful counterexamples is tight,
while a factor of $d$ remains in the noiseless term. It would be interesting
to determine whether the upper bound can be improved to
$O(d\log n+dk)$, or whether there are geometric classes requiring
$\Omega(d^2\log n)$ queries.

\paragraph{Starting from partial information.}
Another natural extension is to assume that some comparisons are known in
advance. In the recommendation-system example, for instance, the user may
have already specified preferences between certain pairs of restaurants.
More abstractly, the known comparisons form a poset $P$, and the unknown
target ranking is one of its $L$ linear extensions.

In the truthful setting, the optimal deterministic query complexity in this
more general problem is~$\Theta(\log L)$. The upper bound follows from
essentially the same geometric argument used above, applied to the order
polytope of $P$. For the lower bound, recall the theorem of Kahn and Saks:
whenever $P$ is not already a total order, there is an incomparable pair
$i,j$ such that at least a $3/11$ fraction of the linear extensions place
$i$ before $j$, and at least a $3/11$ fraction place $j$ before $i$.
Applying this theorem recursively gives a balanced comparison tree for the
$L$ linear extensions: at every internal node we compare such a pair and
restrict to the linear extensions consistent with the chosen orientation.
Since either child retains at least a $3/11$ fraction of the current linear
extensions, every root-to-leaf path has depth $\Omega(\log L)$. The same
random-branch argument used in our lower bound for unrestricted rankings
then implies that $\Omega(\log L)$ queries are necessary.

The situation becomes less clear when untruthful counterexamples are
allowed. Let $x$ denote the width of $P$ (i.e.\ the largest possible size of an antichain in $P$). A maximum antichain of size $x$
gives, by applying our Condorcet-cycle construction to these items, the
lower bound
\[
    \Omega(\log L+xk).
\]
On the other hand, the strong-majority argument used above can be adapted
to give
\[
    O\bigl(x(\log L+k)\bigr).
\]
This upper bound is not always tight: when $P$ is an antichain, $x=n$ and
$L=n!$, whereas our main result gives the sharper bound
\[
    \Theta(\log L+xk)
    =
    \Theta(n\log n+nk).
\]
It would be interesting to characterize, up to constant factors, the optimal
query complexity for every starting poset $P$ and every $k$. In particular,
it is not clear whether this complexity is determined by $L$ and $x$ alone,
or whether additional structural parameters of the poset are needed.

\paragraph{Optimal constants for unrestricted rankings.}
Even for unrestricted rankings, where our bounds are tight up to constant
factors, the precise query complexity remains open. In particular, it would
be interesting to determine the optimal constants in front of the
$n\log n$ and $nk$ terms. The two terms arise from rather different
phenomena---sorting in the noiseless case and the Condorcet-cycle
construction in the noisy case---and it is not clear whether a single
optimal strategy can simultaneously achieve the best constants for both.

\paragraph{Computational efficiency.}
In this work we focus on query complexity and do not develop fully efficient
implementations of our algorithms. As discussed above, the log-concave
distributions arising in the unrestricted upper bound can in principle be
sampled using standard random-walk methods, allowing the relevant
barycenters to be approximated in randomized polynomial time. It would be
useful to make this implementation explicit and to understand its
computational complexity more precisely. More ambitiously, we would like to
find efficient combinatorial algorithms for these problems. Such algorithms
might also lead to more direct proofs of our upper bounds, avoiding the use
of Gr\"unbaum's theorem and perhaps shedding new light on the classical
balanced-pair phenomenon for linear extensions.

\paragraph{Richer noise models.}
Our noise model is deliberately simple: we only assume that the total number
of untruthful counterexamples is bounded. A natural next step is to study
richer stochastic noise models. For example, fix $p<1/2$ and suppose that
on each round, conditional on the entire history, the oracle may return an
arbitrary untruthful counterexample with probability at most $p$, while
otherwise its response must be truthful. It would be interesting to
determine the optimal query complexity in this model and whether our
weighted approach extends to it. One could also consider feedback generated
from an underlying preference model, such as Bradley--Terry
\cite{BradleyTerry1952}. The methods developed here may extend to some such
settings, but a systematic study remains open.

\subsection*{AI disclosure}
ChatGPT was used in several stages of the preparation of this paper.
First, GPT-5.6 Sol with Extra High reasoning was used for
editing and drafting the manuscript, as well as for checking references
and searching for additional related work. Second, GPT-5.6 Sol with Extra
High reasoning was used to assist with calculations and verification
surrounding the surrogate log-concave update used in the upper bound for
unrestricted rankings. Finally, GPT-5.6 Sol Pro was asked to investigate
the open questions suggested by this work. It made some progress on
improving constants and suggested several possible approaches, but did not
resolve any of these questions. All mathematical arguments and claims
appearing in the paper were found and checked by the authors and are their responsibility.

\bibliographystyle{alpha}
\bibliography{references}

\end{document}